\documentclass[sigconf]{aamas}

\usepackage{balance} 

\usepackage{subcaption}
\usepackage{graphicx}
\usepackage{xcolor}

\usepackage{amsmath}
\DeclareMathOperator*{\argmax}{arg\,max}

\usepackage{dsfont}

\usepackage{booktabs}

\usepackage{amsmath,amssymb,amsfonts}
\newcommand{\ind}{\mathbb{I}} 

\usepackage{graphicx}
\usepackage{subcaption} 

\usepackage{booktabs}
\usepackage{multirow}

\theoremstyle{definition}               
\newtheorem{definition}{Definition}[section]  
\newtheorem*{definition*}{Definition}         

\usepackage{amssymb, amsmath}
\usepackage[ruled,vlined,linesnumbered]{algorithm2e}
\usepackage{pgfplots}
\pgfplotsset{compat=1.18}
\usepgfplotslibrary{groupplots}
\usetikzlibrary{patterns,positioning}
\usepackage{booktabs}
\usepackage{multirow}
\usepackage{xcolor}
\usepackage{colortbl}
\usepackage{array}
\usepackage{booktabs}
\usepackage{tabularx}
\usepackage{array}
\usepackage{makecell}
\usepackage{needspace}
\usepackage{placeins}
\usepackage{caption}
\usepackage{cuted}

\definecolor{cellS}{HTML}{4472C4}   
\definecolor{cellH}{HTML}{595959}   
\definecolor{cellG}{HTML}{548235}   
\definecolor{cellF}{HTML}{F2F2F2}   

\tikzset{
    block/.style={
        rectangle,
        draw,
        rounded corners,
        minimum height=0.9cm,
        minimum width=2.2cm,
        align=center,
        font=\footnotesize
    },
    process/.style={
        block,
        fill=gray!10
    },
    highlight/.style={
        block,
        fill=blue!10,
        draw=blue!60
    },
    data/.style={
        block,
        fill=green!10
    },
    arrow/.style={
        ->,
        thick
    }
}
\usetikzlibrary{
  arrows.meta,
  positioning,
  shapes.geometric,
  shapes.misc,
  fit,
  backgrounds,
  calc,
  decorations.pathreplacing,
  decorations.markings,
  patterns,
  shadows.blur,
}

\DontPrintSemicolon
\SetAlFnt{\small}
\SetAlgoNlRelativeSize{-1}
\SetNlSty{\tiny}{}{:}
\SetAlgoInsideSkip{smallskip}

\SetKwSty{textbf}{}{}                 
\SetCommentSty{\footnotesize\upshape} 
\SetKwComment{tcp}{\(\triangleright\)\ }{} 

\DeclareMathOperator{\MaxLID}{MaxLID}

\DontPrintSemicolon
\SetAlFnt{\small}

\SetKwSty{AlgoKwSty}{}{} 

\SetCommentSty{AlgoCommentSty} 

\SetNlSty{AlgoNlSty}{}{:}      

\newcommand{\AlgoCommentMark}{\ensuremath{\triangleright}}
\SetKwComment{tcp}{\AlgoCommentMark\ }{}

\definecolor{cSafe}{HTML}{2E8B57}
\definecolor{cUnsafe}{HTML}{C0392B}
\definecolor{cNeutral}{HTML}{2C3E50}
\definecolor{cTask1}{HTML}{2980B9}
\definecolor{cTask2}{HTML}{E67E22}
\definecolor{cMethod}{HTML}{6C3483}
\definecolor{cGrayBg}{HTML}{F5F5F5}
\definecolor{cGrayMed}{HTML}{BDC3C7}
\definecolor{cGrayDark}{HTML}{7F8C8D}

\usepackage{pifont}

\newtheorem{proposition}{Proposition}
\newtheorem{assumption}{Assumption}
\newtheorem{remark}{Remark}
\newtheorem{corollary}{Corollary}
\newtheorem{theorem}{Theorem}

\newcommand{\Asafe}{\mathcal{A}^{\mathrm{safe}}}
\newcommand{\phisafe}{\phi^{\mathrm{safe}}}

\setcopyright{none}
\renewcommand\footnotetextcopyrightpermission[1]{}

\acmSubmissionID{22}

\title{\textsc{SafeAdapt}: Provably Safe Policy Updates \\ in Deep Reinforcement Learning}

\author{Maksim Anisimov}
\affiliation{
  \institution{Imperial College London}
  \city{London}
  \country{United Kingdom}}
\email{m.anisimov23@imperial.ac.uk}

\author{Francesco Belardinelli}
\affiliation{
  \institution{Imperial College London}
  \city{London}
  \country{United Kingdom}}
\email{francesco.belardinelli@imperial.ac.uk}

\author{Matthew Wicker}
\affiliation{
  \institution{Imperial College London}
  \city{London}
  \country{United Kingdom}}
\email{m.wicker@imperial.ac.uk}

\begin{abstract}
Safety guarantees are a prerequisite to the deployment of reinforcement learning (RL) agents in safety-critical tasks. Often, deployment environments exhibit non-stationary dynamics or are subject to changing performance goals, requiring updates to the learned policy. This leads to a fundamental challenge: how to update an RL policy while preserving its safety properties on previously encountered tasks? The majority of current approaches either do not provide formal guarantees or verify policy safety only \emph{a posteriori}. We propose a novel \emph{a priori} approach to safe policy updates in continual RL by introducing the \emph{Rashomon set}: a region in policy parameter space certified to meet safety constraints within the demonstration data distribution. We then show that one can provide formal, provable guarantees for arbitrary RL algorithms used to update a policy by projecting their updates onto the Rashomon set. Empirically, we validate this approach across grid-world navigation environments (Frozen Lake and Poisoned Apple) where we guarantee an \emph{a priori} provably deterministic safety on the source task during downstream adaptation. In contrast, we observe that regularisation-based baselines experience catastrophic forgetting of safety constraints while our approach enables strong adaptation with provable guarantees that safety is preserved.
\end{abstract}

\keywords{Reinforcement Learning, Continual Learning, AI Safety, Verification}

\newcommand{\BibTeX}{\rm B\kern-.05em{\sc i\kern-.025em b}\kern-.08em\TeX}

\begin{document}


\pagestyle{fancy}
\fancyhead{}


\maketitle


\section{Introduction}\label{sec:intro}

Reinforcement learning (RL) has achieved remarkable success in sequential decision-making, from game playing \cite{DBLP:journals/corr/MnihKSGAWR13}
to robotic control \cite{SurveyRobotics} and autonomous driving \cite{KHJMRALBS19}. As RL agents move closer to real-world deployment in safety-critical domains -- including autonomous vehicles, medical treatment planning, and industrial process control -- it becomes essential to guarantee that learned policies satisfy safety constraints \citep{bommasani2021opportunities, szpruch2025insuring}. At the same time, deployed agents must often operate in non-stationary settings: objectives evolve, dynamics drift, and new tasks appear. This continual adaptation introduces a central tension: policy updates aimed at improving performance often substantially degrade safety properties on previously encountered tasks.

Existing approaches to this problem fall short in important ways. Regularisation-based continual learning methods such as Elastic Weight Consolidation (EWC)~\citep{kirkpatrick2017overcoming} discourage parameter drift but do not provide formal guarantees that safety is preserved after adaptation. Shielding and action-space filtering methods~\citep{alshiekh2018safe,dalal2018safe} can guarantee safety by overriding unsafe actions online, even after unrestricted policy adaptation. However, they do not certify that the adapted policy itself remains safe, they rely on a runtime intervention mechanism, and they may mask catastrophic forgetting of safe behaviour rather than prevent it. Safety-aware transfer and safe policy-update methods~\citep{held2017probabilistically,cloete2025sport} typically provide probabilistic guarantees via policy-ratio constraints or constrained optimisation, but they are unable to provide \textit{a priori} guarantees of safety and instead require users to perform expensive evaluations to verify that updated policy satisfies safety. 
More broadly, safe RL methods that ensure safety during training on a single task~\citep{achiam2017cpo_icml,chow2018lyapunov} do not directly address what happens when the resulting policy is later adapted to a new task. We provide the first, to our knowledge, \emph{parameter-space certificate for source-task safety} during downstream policy adaptation in RL. In finite or discretised settings with an evaluable unsafety labelling function and greedy deployment, this certificate becomes \emph{deterministic} and does not require full knowledge of transition dynamics.

\paragraph{Contributions.}
We propose \textsc{SafeAdapt}---an \emph{a priori} approach to provably safe policy updates in continual RL. Our method leverages the Local Invariant Domain (LID) framework~\citep{elmeckerplakolm2025provably} to construct a \emph{certified} region in policy parameter space -- a \emph{Rashomon set} -- within which all policies satisfy a specified safety property on the source task. Concretely, given an unsafety labelling function that marks unsafe state--action pairs, we:
\begin{itemize}
    \item \textbf{Formulate the notion of a \textit{Rashomon Set} in policy parameter space} and show that computing a set of safe parameters can be posed as an optimisation problem using a sound, differentiable \textit{safety surrogate}.
    \item \textbf{Compute certified Rashomon sets in parameter space} using Interval Bound Propagation (IBP) and our novel differentiable safety surrogate, ensuring that every policy inside the set satisfies the source-task safety specification under greedy deployment.
    \item \textbf{Perform downstream adaptation with constraints} by combining PPO updates with projected gradient descent, guaranteeing that parameter updates remain inside the certified safe region throughout training.
\end{itemize}
 
Empirically, we evaluate the method across grid-world navigation environments (Poisoned Apple and Frozen Lake). Our Rashomon-constrained updates preserve source-task safety after adaptation while achieving competitive downstream performance (see Section~\ref{sec:experiments}). Notably, we show that continual learning methods without formal guarantees (e.g. EWC) can lead to catastrophic forgetting of safe behaviour in the source task while our methods are able to at once formally rule out this catastrophic forgetting while enabling strong adaptation to new tasks. 

\paragraph{Paper organisation.}
This paper is organised as follows. Section~\ref{sec:related-work} reviews related work on safe reinforcement learning, continual learning, and neural network verification. Section~\ref{sec:prelims} provides background on MDPs with unsafe states, policy optimisation, and interval bound propagation. Section~\ref{sec:methodology} presents the proposed method---\textsc{SafeAdapt}. Section~\ref{sec:experiments} describes the experimental setup and results\footnote{Code is available at \url{https://github.com/maxanisimov/provably-safe-policy-updates}.}. Section~\ref{sec:conclusion} discusses implications, limitations, and future work.

\section{Related Work}\label{sec:related-work}

Our method lies at the intersection of safe reinforcement learning, continual learning, and neural network verification. We focus on \emph{formally} certifying preservation of source-task safety during downstream adaptation in \emph{parameter space} in the presence of an \emph{unsafety labelling function}. We also note that when safe actions deterministically preserve safety under greedy deployment, this certification becomes deterministic. Table~\ref{tab:related-works-comparison} summarises the key distinctions between our algorithm and related methods.

\begin{table}[ht]
\centering
\caption{Comparison of safety-aware adaptation methods. \textbf{Safety structure}: the type of prior safety/environment structure assumed by the method. \textbf{Formal?}: whether the method can provide a formal safety guarantee. \textbf{Space}: where the safety constraint is enforced. \textbf{CL}: whether the method addresses continual learning.}
\label{tab:related-works-comparison}
\setlength{\tabcolsep}{3pt}
\scriptsize
\begin{tabular}{@{}lcccc@{}}
\toprule
\textbf{Method} & \textbf{Safety structure} & \textbf{Formal?} & \textbf{Space} & \textbf{CL} \\
\midrule
EWC~\citep{kirkpatrick2017overcoming}           & None beyond source-task data & No  & Parameters   & Yes \\
Shielding~\citep{alshiekh2018safe}              & Abstract environment model & Yes & Action & No \\
SPoRt~\citep{cloete2025sport}                   & Base policy + scenario-based safety data      & Yes & Policy ratio & No \\
\citet{zhang2024safety}                         & Explicit transition dynamics model            & Yes & Action/state & No \\
SaGui~\citep{yang2023sagui}                     & Safety guidance / exploration constraints     & No  & Action       & No \\
\citet{held2017probabilistically}               & Learned damage model                          & Yes & Action       & No \\
\citet{berkenkamp2017safe}                      & Learned dynamics model                        & Yes & State        & No \\
P\&C~\citep{schwarz2018progress}                & Transfer mechanism & No & Distillation & Yes \\
\midrule
\textbf{\textsc{SafeAdapt} (Ours)}              & \textbf{Unsafety labelling function $U(s,a)$} & \textbf{Yes} & \textbf{Parameters} & \textbf{Yes} \\
\bottomrule
\end{tabular}
\end{table}

\paragraph{Safe Reinforcement Learning.}\label{sec:safe_rl}
Safe RL is commonly formalised via Constrained MDPs~\citep{altman1999cmdp}---see~\citet{garcia2015comprehensive,gu2022review} for surveys. \emph{Constrained policy optimisation} methods such as CPO~\citep{achiam2017cpo_icml}, CRPO~\citep{xu2021crpo}, and PCPO~\citep{yang2020projection} guarantee near-constraint satisfaction during training but only for \emph{individual} tasks; they do not address safety preservation when the policy is adapted to a new task. \emph{Lyapunov-based} approaches provide safe updates at every iteration: \citet{berkenkamp2017safe} require a learned dynamics model (Gaussian processes), while \citet{chow2018lyapunov} formulate Lyapunov constraints within a model-free primal-dual framework and operate in value space rather than parameter space. \emph{Shielding} methods~\citep{alshiekh2018safe,dalal2018safe,elsayedaly2021safe} correct unsafe actions at runtime via safety filters, typically requiring an abstract environment model for shield synthesis. In contrast, our method certifies safety in parameter space, thereby enforcing implicit safety of a neural policy which does not require any runtime intervention. 

\paragraph{Continual Learning.}\label{sec:continual_learning}
Continual learning addresses catastrophic forgetting~\citep{ring1994continual,thrun1998lifelong} via the stability--plasticity trade-off. Regularisation methods such as EWC~\citep{kirkpatrick2017overcoming}, SI~\citep{zenke2017continual}, and MAS~\citep{aljundi2018memory} penalise changes to important parameters but provide no formal guarantees on what knowledge the neural network retains. EWC is the most natural baseline for our approach: both methods operate in parameter space and aim to limit how far parameters move from a reference policy. However, EWC's soft quadratic penalty only \emph{encourages} proximity to prior parameters -- a sufficiently strong reward signal can override the penalty and cause arbitrary safety degradation. In contrast, our method enforces a \emph{hard} constraint: parameters are projected onto the certified safe orthotope after every gradient step, providing a formal guarantee that no update can violate safety in a source task. Architecture and replay-based methods~\citep{rusu2016progressive,mallya2018packnet,lopezpaz2017gradient} address forgetting through structural or data-level mechanisms and are orthogonal to our parameter-space certification; in principle, they could be combined with our approach. In RL setting, continual methods such as Progress~\&~Compress~\citep{schwarz2018progress} and policy distillation~\citep{rusu2015policy} mitigate forgetting via knowledge transfer but provide no formal safety guarantees. Existing continual RL methods~\citep{khetarpal2022towards}
do not provide formal guarantees on source-task safety preservation, and our paper addresses this gap.

\paragraph{Safety-Aware Policy Updates.}\label{sec:safe_transfer}
A growing literature considers safety preservation during policy updates. SPoRt~\citep{cloete2025sport} bounds the violation probability of a task-specific policy using the scenario approach~\citep{campi2008exact} and constrains the policy ratio $\pi_{\mathrm{task}}(a|s) / \pi_{\mathrm{base}}(a|s) \leq \alpha$ via a per-timestep convex projection. Its guarantees are probabilistic and the bound grows with episode length~$T$, becoming vacuous for long horizons. Importantly, SPoRt addresses safe adaptation of a base policy to a single task -- it does not consider catastrophic forgetting of safety requirements on prior tasks. \citet{held2017probabilistically} formalise safe transfer via an expected damage bound, providing probabilistic guarantees that require a learned damage model. SaGui~\citep{yang2023sagui} provides empirical safety improvements without formal certificates. \citet{zhang2024safety} introduce a transfer-learning framework for safe RL that trains in a non-dangerous environment and then transfers the policy to the dangerous target system with theoretical stability and safety guarantees. Finally, \citet{bouammar2015safe} propose a lifelong policy-gradient method that learns multiple tasks online while enforcing safety constraints and achieving sublinear regret, but without any formal certificates.

\paragraph{Neural Network Verification.}\label{sec:verification}
We leverage neural network verification to certify safety over parameter regions. Interval Bound Propagation (IBP)~\citep{gowal2018effectiveness,mirman2018differentiable} computes output bounds via a single forward pass but produces increasingly loose bounds with network depth, limiting the size of the certified region. 
Tighter methods such as CROWN~\citep{zhang2018crown} and $\alpha$/$\beta$-CROWN~\citep{xu2021fast} can be substituted at higher computational cost.
While these methods focus on certifying input regions, work in Bayesian Neural Networks (BNN) introduced the notion of certifying parameter regions \citep{wicker2020probabilistic} which has also been completed for BNN policies in RL \citep{wicker2021certification, wicker2024probabilistic}. Certified parameter regions were further optimised and studied by computing bounds over their derivatives \citep{wickerrobust} to proving dataset  robustness \citep{sosnincertified} and privacy \citep{sosnin2025abstract, wicker2025certification}.
Most closely related is the Local Invariant Domain (LID) framework of \citet{elmeckerplakolm2025provably}, which computes maximal certified regions in parameter space via primal--dual optimisation over abstract domains. We adopt the LID framework and extend it to safe continual RL, where the specification is defined by an unsafety labelling function over state--action pairs and multiple safe actions per state are handled via a multi-label certificate. Prior verification in RL~\citep{bastani2018verifiable} has been used as a post-hoc check; our work uses it to define a safe region \emph{within which} continual learning is allowed to proceed.

\paragraph{\textsc{SafeAdapt} positioning.}\label{sec:positioning}
Our proposed method \textsc{SafeAdapt} assumes access to an unsafety labelling function $U: \mathcal{S} \times \mathcal{A} \to \{0, 1\}$. This is motivated by the observation that engineers can often specify forbidden state-action pairs even when the transition dynamics is unknown. Using the LID framework~\citep{elmeckerplakolm2025provably}, we construct a certified safe parameter region and constrain all downstream updates to remain within it via projected gradient descent. This yields, to our knowledge, the first method that can provide formal source-task safety guarantees during downstream adaptation.

\section{Preliminaries}\label{sec:prelims}

We introduce the background required for our method: Markov Decision Processes with unsafe states and policy optimisation via reinforcement learning. We refer to~\citet{SuttonBarto2018} for broader background on RL.

\paragraph{MDPs with Unsafe States.}\label{sec:prelim_mdp}
We model sequential decision-making as a Markov Decision Process (MDP) $M = (\mathcal{S}, \mathcal{A}, P, r, \gamma, \mu_0)$ with state space $\mathcal{S}$, action space $\mathcal{A}$, transition dynamics $P(s' | s, a)$, reward function $r: \mathcal{S} \times \mathcal{A} \to \mathbb{R}$, discount factor $\gamma \in [0,1)$, and initial state distribution $\mu_0$. Since not all actions may be executable in every state, we also define the set of state-specific actions as $\mathcal{A}(s)$. To capture safety, the state space is partitioned into safe and \emph{unsafe regions}, which is common in safe RL literature \citep{gu2022review}. Intuitively, the set of \emph{unsafe states} $\mathcal{S}_u \subseteq \mathcal{S}$ is the set of states in which the agent is experiencing a safety violation. For example, those can be hazards such as the ice holes in the Frozen Lake environment, which is used in our experiments.

\begin{definition}[Unsafety labelling function]
The {\em unsafety labelling function} $U \colon \mathcal{S} \times \mathcal{A} \to \{0, 1\}$
assigns $U(s, a) = 1$ iff $P\left(s' \in \mathcal{S}_u | s, a \right) > 0$, that is, if taking action $a$ in state $s$ can lead to an unsafe state in the next step, and $U(s, a) = 0$ otherwise.
\end{definition}

Note that we use a definition of $U$ which is \emph{conservative} since it does not allow \emph{any} chance that the next state will be unsafe. Using $U$, we define the  \emph{set of safety-critical states} and \emph{safe action set}:
\begin{definition}[Safety-critical states]
The set of {\em safety-critical states} is
$\mathcal{S}_{\mathrm{sc}} = \{s \in \mathcal{S} : \exists\, a \in \mathcal{A}(s) \text{ s.t. } U(s, a) = 1\}$,
i.e., states at which at least one action is unsafe.
\end{definition}

\begin{definition}[Safe action set]\label{def:safe_action_set}
    The \emph{safe action set} for a state $s$ is a set of actions which 
    guarantee that the next state will be safe, i.e.:
    \begin{equation}\label{eq:safe_action_set}
    \mathcal{A}^{\mathrm{safe}}(s) = \{a \in \mathcal{A}(s) : U(s, a) = 0\}.
\end{equation}
\end{definition}
Finally, we introduce safe policies as those that return safe actions in every safety-critical state. A policy $\pi_\theta$, parametrised by $\theta$, is a function that maps states into action distributions $\pi_\theta(a | s)$. 
\begin{definition}[Safe policy]
    A policy $\pi$ is \emph{safe} if its deterministic (i.e. greedy) deployment leads to selecting safe actions in every state where unsafe actions exist, i.e.:
    \begin{equation}\label{eq:safe_policy}
        \argmax_{a}\pi(a \mid s) \in \mathcal{A}^{\mathrm{safe}}(s) \;
        \forall s \in \mathcal{S}_{\mathrm{sc}}. 
    \end{equation}
\end{definition}

\paragraph{Reinforcement Learning.}
Provided an RL problem modelled as an MDP, the standard RL objective is to find a policy maximising the expected discounted return defined as:
\begin{equation}\label{eq:rl_objective}
    J(\theta) = \mathbb{E}_{\tau \sim \pi_\theta}\!\left[\sum_{t=0}^{\infty} \gamma^t\, r(s_t, a_t)\right],
\end{equation}
where the expectation is over trajectories $\tau = (s_0, a_0, s_1, a_1, \ldots)$ with $s_0 \sim \mu_0$, $a_t \sim \pi_\theta(\cdot \mid s_t)$, and $s_{t+1} \sim P(\cdot \mid s_t, a_t)$.

\paragraph{Problem statement.}
Given a source-task policy $\pi_{\theta_{\mathrm{source}}}$ trained on an
MDP $\mathcal{M}_{\mathrm{source}}$, an unsafety labelling function
$U_{\mathrm{source}} \colon \mathcal{S} \times \mathcal{A} \to \{0, 1\}$
for $\mathcal{M}_{\mathrm{source}}$, and a downstream MDP $\mathcal{M}_{\mathrm{down}}$ defined over the same state--action space but different transition dynamics, we seek an adapted policy
$\pi_{\theta_{\mathrm{down}}}$ that maximises the downstream return
$J_{\mathrm{down}}(\theta_{\mathrm{down}})$ while provably remaining safe on the
source task, i.e.,
\begin{equation}
    \forall\, s \in \mathcal{S}_{\mathrm{sc}}:\quad
    \argmax_{a \in \mathcal{A}(s)}\, \pi_{\theta_{\mathrm{down}}}(a \mid s)
    \;\in\; \mathcal{A}^{\mathrm{safe}}(s).
    \label{eq:problem-statement}
\end{equation}

We require this guarantee to hold \emph{a priori} --- as a certified property of the policy parameters -- without full access to the source-task transition dynamics and without runtime action filtering.

\begin{definition}[Rashomon set]\label{def:rashomon_set}
     We define a \emph{Rashomon set}~\citep{semenova2019rashomon} $\Theta^{\mathrm{safe}}_{i}$ for a task $i$ as a parameter space in which any policy $\pi_{\theta'}$ parameterised with $\theta' \in \Theta^{\mathrm{safe}}_{i}$ is safe.
\end{definition}

\section{Methodology}\label{sec:methodology}

We present our method \textsc{SafeAdapt}
for safe continual reinforcement learning.
The key idea is to construct a certified region in parameter space -- a \emph{Rashomon set}~\citep{elmeckerplakolm2025provably} -- around a safe source-task policy, and to constrain all downstream policy updates to remain within this region via projected gradient descent.
We first present the algorithm (\S\ref{sec:algorithm}), describe each phase (\S\ref{sec:phase1}--\S\ref{sec:phase4}), and then state the assumptions and theoretical guarantees (\S\ref{sec:assumptions}).

\subsection{Algorithm Overview}\label{sec:algorithm}

Given a source-task policy $\pi_{\mathrm{source}}$ and an unsafety labelling function $U_{\mathrm{source}}$, we first construct a certified safe region $\Theta_{\mathrm{source}}^{\mathrm{safe}}$ (a Rashomon set) in parameter space within which all policies provably satisfy the safety specification in the source task. Downstream adaptation then proceeds as a constrained optimisation: any gradient-based method may be used, provided that after each update the parameters are projected back onto $\Theta_{\mathrm{source}}^{\mathrm{safe}}$ via element-wise clipping.

Algorithm~\ref{alg:safe_continual_rl} gives the complete procedure and Figure~\ref{fig:method_diagram} illustrates its structure. The method proceeds in three phases: (1)~construct a safe demonstration dataset, (2)~compute the maximal locally invariant domain (LID), and (3)~adapt to the downstream task with projected gradient descent.

\begin{algorithm}[t]
\caption{\textsc{SafeAdapt}: Safe Continual RL via Rashomon Set Computation}
\label{alg:safe_continual_rl}
\KwIn{\;
(1) Source-task policy $\pi_{\mathrm{source}}$ with parameters $\theta_{\mathrm{source}}$ \;
(2) Unsafety labeller $U_{\mathrm{source}}:\mathcal{S}\times\mathcal{A}\to\{0,1\}$\;
(3) Global safety specification $\Phi^{\mathrm{safe}}$ and its surrogate $\widetilde{\Phi}^{\mathrm{safe}}$\;
(4) Global safety surrogate threshold $\delta$\;}
\KwOut{Policy $\pi_{\theta_{\mathrm{down}}}$ adapted to a downstream task that is at least as safe as $\pi_{\mathrm{source}}$ in the source task\;}
\BlankLine
\tcc{\textbf{Phase 1: Safe behaviour demonstration (\S\ref{sec:phase1})}}
$S_{\mathrm{sc}} \leftarrow \{\, s \in S : \exists a \in \mathcal{A}(s)\ \text{s.t.}\ U_{\mathrm{source}}(s,a)=1 \,\}$\;
$\mathcal{A}^{\mathrm{safe}}(s) \leftarrow \{\, a \in \mathcal{A}(s) : U_{\mathrm{source}}(s,a)=0 \,\}$\;
$D^{\mathrm{safe}}_{\mathrm{source}} \leftarrow \{\, (s, \mathcal{A}^{\mathrm{safe}}(s)) : s \in \mathcal{S}_{\mathrm{sc}} \,\}$
\BlankLine
\tcc{\textbf{Phase 2: Certified safe region (\S\ref{sec:phase3})}}
\If{$\widetilde{\Phi}^{\mathrm{safe}}_{\tau, \mathrm{sc}}(\pi_{\mathrm{source}};\, D_{\mathrm{source}}^{\mathrm{safe}}) < \delta$}{
  $\Theta_{\mathrm{source}}^{\mathrm{safe}} \gets \emptyset$\;
  \Return{$\bot$} \tcp*{method cannot compute a provably safe parameter region}
}
$\alpha^\star \gets \MaxLID(\theta_{\mathrm{source}},\,\widetilde{\Phi}^{\mathrm{safe}}_{\tau, \mathrm{sc}},\,\delta)$ \tcp*{primal--dual + IBP}
$\Theta_{\mathrm{source}}^{\mathrm{safe}} \gets
\{\theta': \theta_{\mathrm{source}}-\alpha^\star \le \theta' \le \theta_{\mathrm{source}}+\alpha^\star\}$\tcp*{Rashomon set}
\BlankLine
\tcc{\textbf{Phase 3: Safe downstream adaptation (\S\ref{sec:phase4})}}
(a) Initialise the downstream policy as a source-task policy: $\pi_{\theta_{\mathrm{down}}} \gets \pi_{\mathrm{source}}$\;
(b) Solve via a projected gradient descent:
\begin{equation}
    \theta_{\mathrm{down}} = \argmax_{\theta \,\in\, \Theta_{\mathrm{source}}^{\mathrm{safe}}} \; J_{\mathrm{down}}(\theta),
\end{equation}
\Return{$\pi_{\theta_{\mathrm{down}}}$}\;
\end{algorithm}

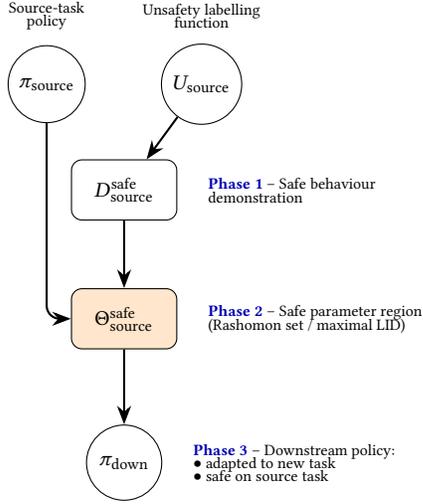
\begin{figure}[t]
\centering
\begin{tikzpicture}[
    node distance=1cm and 1.2cm,
    box/.style={draw, rounded corners, minimum width=1.4cm, minimum height=0.8cm, align=center, font=\small},
    circlebox/.style={draw, circle, minimum size=1cm, align=center, font=\small},
    arr/.style={-{Stealth[length=2.5mm]}, thick},
    phase/.style={font=\scriptsize\bfseries, text=blue!70!black},
]

\node[circlebox] (pi1) {$\pi_{\text{source}}$};
\node[circlebox, right=1cm of pi1] (U1) {$U_{\text{source}}$};
\node[above=0.1cm of pi1, font=\scriptsize, align=center] {Source-task\\[-2pt]policy};
\node[above=0.1cm of U1, font=\scriptsize, align=center] {Unsafety labelling\\[-2pt]function};

\node[box, below=1.0cm of $(pi1)!0.5!(U1)$] (Dsafe) {$D_{\text{source}}^{\text{safe}}$};
\node[right=0.3cm of Dsafe, font=\scriptsize, align=left] {\textcolor{blue!70!black}{\textbf{Phase 1}} -- Safe behaviour\\[-2pt]demonstration};

\draw[arr] (U1) -- (Dsafe);

\node[box, below=0.9cm of Dsafe, fill=orange!20] (theta1) {$\Theta_{\text{source}}^{\text{safe}}$};
\node[right=0.3cm of theta1, font=\scriptsize, align=left] {\textcolor{blue!70!black}{\textbf{Phase 2}} -- Safe parameter region\\[-2pt](Rashomon set / maximal LID)};

\draw[arr] (Dsafe) -- (theta1);

\node[circlebox, below=1.0cm of theta1] (pi2) {$\pi_{\text{down}}$};
\node[right=0.3cm of pi2, font=\scriptsize, align=left] {\textcolor{blue!70!black}{\textbf{Phase 3}} -- Downstream policy:\\[-2pt]$\bullet$ adapted to new task\\[-2pt]$\bullet$ safe on source task};

\draw[arr] (theta1) -- (pi2);

\draw[arr, rounded corners=5pt]
    (pi1.south) -- (pi1 |- theta1) -- (theta1.west);

\end{tikzpicture}
\caption{Overview of the proposed method. The unsafety labelling function $U_{\mathrm{source}}$ yields safe demonstrations $D_{\mathrm{source}}^{\mathrm{safe}}$ \textbf{(Phase~1)}. A certified safe parameter region $\Theta_{\mathrm{source}}^{\mathrm{safe}}$ is computed around $\pi_{\mathrm{source}}$ \textbf{(Phase~2)}, and the downstream policy $\pi_{\mathrm{down}}$ is adapted within this region \textbf{(Phase~3)}.}
\label{fig:method_diagram}
\end{figure}

\subsection{Phase 1: Safe Behaviour Demonstration}\label{sec:phase1}

In supervised learning, the LID specification is evaluated on a fixed dataset independent of model parameters. In RL, the state distribution shifts with the policy, creating a circular dependency. We resolve this by constructing a \emph{distribution-independent} safety constraint: using $U_{\mathrm{source}}$, we list all safe state--action pairs and form the safe demonstration dataset:
\begin{equation}\label{eq:Dsafe}
    D_{\mathrm{source}}^{\mathrm{safe}} = \bigl\{(s,\, \mathcal{A}^{\mathrm{safe}}(s)) : s \in \mathcal{S}_{\mathrm{sc}} \bigr\}.
\end{equation}
States where all actions are safe are excluded from the safety demonstration dataset due to redundancy. By enforcing safety at every state in $D_{\mathrm{source}}^{\mathrm{safe}}$, we guarantee safety along any trajectory that an updated policy can have in the source task.

\begin{figure}[t]
    \centering
    \includegraphics[width=0.7\linewidth]{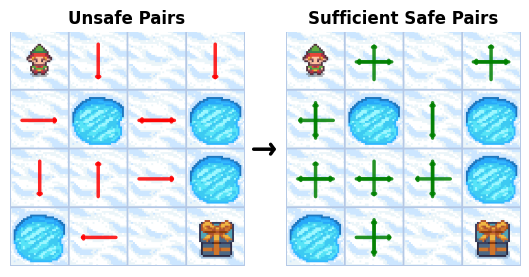}
    \caption{Safe demonstration dataset construction in Frozen Lake environment. We label state-action pairs that lead to ice holes as unsafe. Then, we derive a demonstration of safe state-action pairs, which exclude states without unsafe actions.}
    \label{fig:safe-behaviour-demo}
\end{figure}

\subsection{Phase 2: Certified Safe Region}\label{sec:phase3}

We compute the maximal $\Theta_{\mathrm{source}}^{\mathrm{safe}}$ centred at $\theta_{\mathrm{source}}$: the largest orthotope in parameter space within which policies are safe on $D_{\mathrm{source}}^{\mathrm{safe}}$. Note that if $\pi_{\mathrm{source}}$ does not satisfy the safety specification, the Rashomon set is empty and the algorithm terminates.

\paragraph{Rashomon set computation.}
We define the safe region as a centre-symmetric orthotope $
\{\theta': \theta_{\mathrm{source}}-\alpha \le \theta' \le \theta_{\mathrm{source}}+\alpha\}$ with half-widths $\alpha \in \mathbb{R}_{\geq 0}^p$. We maximise its volume subject to the lower bound of the safety surrogate inside the orthotope:
\begin{equation}\label{eq:max_lid_method}
    \max_{\alpha \geq 0}\; \sum_{i=1}^p \log \alpha_i \quad \text{s.t.}\quad \min_{\alpha \geq 0} \widetilde{\Phi}(\theta_{\mathrm{source}}, \alpha) \geq \delta,
\end{equation}
solved via the primal--dual algorithm \citep{elmeckerplakolm2025provably}. 

\subsubsection{Safety Specification: Hard vs Surrogate}

We distinguish between a \emph{hard} (exact) safety specification, which is exact but non-differentiable, and a \emph{soft} (surrogate) specification, which is differentiable but only provides a one-sided guarantee.


\begin{definition}[Hard safety specification]\label{def:hard_safety_specification}
A policy satisfies the hard safety specification at state $s$ if $\phi^{\mathrm{safe}}(s) = 1$, where:
\begin{equation}\label{eq:true-spec}
  \phi^{\mathrm{safe}}(s)
  :=
  \ind\!\left\{
    \argmax_{a \in \mathcal{A}(s)} z_\mathcal{A}(s)
    \in \mathcal{A}^{\mathrm{safe}}(s)
  \right\},
\end{equation}
assuming no ties at the maximum.
\end{definition}

The hard specification is:
\begin{itemize}
    \item \textbf{sound and complete}: it exactly captures safety,
    \item \textbf{non-differentiable}: due to the $\argmax$ and indicator.
\end{itemize}

We want to verify that a policy is safe in any safety-critical state:
\begin{equation*}
    \phi^{\mathrm{safe}}(s) = 1 \quad \forall s \in \mathcal{S}_{\mathrm{sc}}.
\end{equation*}

To enable optimisation, we introduce a smooth \emph{safety surrogate}:

\begin{definition}[Safety surrogate]\label{def:safety_surrogate}
A safety surrogate is $\widetilde{\Phi}^{\mathrm{safe}}_{\tau}(s)$:
\begin{equation}
\widetilde{\phi}^{\mathrm{safe}}_{\tau}(s)
:=
\sum_{a \in \mathcal{A}^{\mathrm{safe}}(s)} \pi(a|s; \tau),
\end{equation}
where
\begin{equation*}
\pi(a|s; \tau)
=
\dfrac{e^{z_\mathcal{A}(s)/\tau}}{\sum_{a' \in \mathcal{A}(s)} e^{z_{a'}(s)/\tau}},
\quad \tau > 0.
\end{equation*}
\end{definition}

\begin{proposition}[Properties of the surrogate]\label{proposition:surrogate_properties}
Assume no ties at the maximum. Then:
\begin{enumerate}
    \item \textbf{Differentiability:} $\widetilde{\phi}^{\mathrm{safe}}_{\tau}(s)$ is smooth in $z_\mathcal{A}(s)$ for all $\tau > 0$.
    
    \item \textbf{Consistency:}
    \[
    \lim_{\tau \to 0^+}
    \widetilde{\phi}^{\mathrm{safe}}_{\tau}(s)
    =
    \phi^{\mathrm{safe}}(s).
    \]
    
    \item \textbf{Soundness (one-sided guarantee):} for all $\tau > 0$,
    \[
    \widetilde{\phi}^{\mathrm{safe}}_{\tau}(s)
    >
    \frac{|\mathcal{A}^{\mathrm{safe}}(s)|}{1 + |\mathcal{A}^{\mathrm{safe}}(s)|}
    \;\;\Rightarrow\;\;
    \phi^{\mathrm{safe}}(s) = 1.
    \]
    
    \item \textbf{Non-completeness (for finite $\tau$):}
    \[
    \widetilde{\phi}^{\mathrm{safe}}_{\tau}(s)
    \leq
    \frac{|\mathcal{A}^{\mathrm{safe}}(s)|}{1 + |\mathcal{A}^{\mathrm{safe}}(s)|}
    \;\;\not\Rightarrow\;\;
    \phi^{\mathrm{safe}}(s) = 0.
    \]
\end{enumerate}
\end{proposition}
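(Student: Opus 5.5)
We prove the four items separately. Fix a safety-critical state $s$ and write $k = |\mathcal{A}^{\mathrm{safe}}(s)|$ and $p_a = \pi(a|s;\tau)$, so that $\widetilde{\phi}^{\mathrm{safe}}_{\tau}(s) = \sum_{a \in \mathcal{A}^{\mathrm{safe}}(s)} p_a$. Let $a^\star = \argmax_{a\in\mathcal{A}(s)} z_a(s)$, which is unique by the no-ties assumption.

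\emph{Differentiability.} Each $p_a$ is a quotient of finite sums of exponentials $e^{z_{a'}(s)/\tau}$. These are $C^\infty$ in the logits, and the denominator is strictly positive, so $p_a$ is $C^\infty$. A finite sum of such terms is again smooth.

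\emph{Consistency.} Divide the numerator and denominator of $p_a$ by $e^{z_{a^\star}(s)/\tau}$. This gives
\[
p_a = \frac{e^{(z_a - z_{a^\star})/\tau}}{\sum_{a'} e^{(z_{a'} - z_{a^\star})/\tau}}.
\]
For $a' \neq a^\star$ the gap $z_{a'} - z_{a^\star}$ is strictly negative, because the maximiser is unique. Hence each such exponential tends to $0$ as $\tau \to 0^+$. The denominator therefore tends to $1$, and $p_a \to \mathbb{I}\{a = a^\star\}$. Summing over the safe actions gives the limit $\mathbb{I}\{a^\star \in \mathcal{A}^{\mathrm{safe}}(s)\} = \phi^{\mathrm{safe}}(s)$.

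\emph{Soundness.} This is the step that needs a real argument, and we prove it by contraposition. Suppose $\phi^{\mathrm{safe}}(s) = 0$, so $a^\star$ is unsafe. Softmax is monotone in the logits, so every safe action $a$ satisfies $p_a \le p_{a^\star}$. Writing $S = \widetilde{\phi}^{\mathrm{safe}}_{\tau}(s)$, this gives $S \le k\, p_{a^\star}$. Since $a^\star \notin \mathcal{A}^{\mathrm{safe}}(s)$ and the probabilities sum to one, we also have $p_{a^\star} \le 1 - S$. Combining the two bounds yields $S \le k(1-S)$, that is, $S \le k/(1+k)$. This contradicts the hypothesis $S > k/(1+k)$, so $\phi^{\mathrm{safe}}(s) = 1$.

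The delicate points are the following.
\begin{itemize}
\item The bound $p_a \le p_{a^\star}$ must be derived from the logits, which works because softmax preserves their order.
\item The degenerate case $k = 0$ is consistent: then $S = 0$, so the hypothesis $S > 0$ can never hold.
\item The bound is tight. Taking $k$ safe actions with logits equal to $z_{a^\star}$ and all other actions at $-\infty$ gives $S = k/(k+1)$ exactly, so the threshold cannot be lowered.
\end{itemize}

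\emph{Non-completeness.} A single counterexample suffices. Take one safe action with logit $1$ and two unsafe actions with logit $1-\epsilon$, and set $\tau = 1$. The argmax is then the safe action, so $\phi^{\mathrm{safe}}(s) = 1$. However,
\[
S = \frac{1}{1 + 2e^{-\epsilon}} \to \frac{1}{3} \quad \text{as } \epsilon \to 0,
\]
so for small $\epsilon$ we have $S \le 1/2 = k/(1+k)$. Hence the surrogate can fall below the threshold even though the policy is safe at $s$.
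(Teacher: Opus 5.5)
Your proof is correct. For soundness, the only item the paper actually proves, it is the same averaging argument in contrapositive form. The paper passes through the separating constant $1/(1+k)$, showing $\max_{a\in\mathcal{A}^{\mathrm{safe}}(s)}\pi(a|s) > 1/(1+k) > \max_{a\notin\mathcal{A}^{\mathrm{safe}}(s)}\pi(a|s)$. You instead combine the two bounds $S\le k\,p_{a^\star}$ and $p_{a^\star}\le 1-S$ directly, which also avoids dividing by $k$.

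Your items (1) and (2) supply routine steps the paper omits. Your three-action counterexample for (4) does the same job as the paper's five-action example in the appendix. The one loose spot is the tightness aside: it uses tied and infinite logits, so it holds only as a limiting statement outside the no-ties setting.
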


The surrogate provides a \emph{sufficient but not necessary} condition for safety:
high surrogate values certify safety, but low values are inconclusive. Therefore, we impose the following \emph{sound} constraint:
\begin{equation}
\widetilde{\phi}^{\mathrm{safe}}_{\tau}(s)
>
\frac{|\mathcal{A}^{\mathrm{safe}}(s)|}{1 + |\mathcal{A}^{\mathrm{safe}}(s)|}.
\end{equation}

\subsubsection{Global Safety Specification}

We now lift the specification from states to policies.

\begin{definition}[Critical State Safety Rate]\label{def:critical_state_safety_rate}
Define a critical state safety rate as a proportion of safety-critical states in which the policy is safe:
\begin{equation*}
\Phi^{\mathrm{safe}}_{\mathrm{sc}}(\pi)
=
\frac{1}{|\mathcal{S}_{\mathrm{sc}}|}
\sum_{s \in \mathcal{S}_{\mathrm{sc}}}
\phi^{\mathrm{safe}}(s).
\end{equation*}
\end{definition}

\begin{definition}[Trajectory Safety Rate]\label{def:trajectory_safety_rate}
Define a trajectory safety rate as an expected proportion of episodes in which the policy trajectory does not experience any unsafe state-action pairs:
\begin{equation*}
\Phi^{\mathrm{safe}}_{\mathrm{traj.}}(\pi)
=
\mathbb{E}_{s \sim d^\pi}
\left[
\phi^{\mathrm{safe}}(s)
\right],
\end{equation*}
where $d^\pi(s)$ is the state visitation distribution induced by $\pi$.
\end{definition}

\subsubsection{Global Safety Surrogate}
During optimisation, we replace the hard global specification with a smooth lower bound.

\begin{definition}[Critical State Safety Surrogate]\label{def:critical_state_safety_surrogate}
The critical state safety surrogate is defined as the minimum per-state safety surrogate in safety-critical states:
\begin{equation*}
\widetilde{\Phi}^{\mathrm{safe}}_{\tau, \mathrm{sc}}(\pi)
=
\min_{s \in \mathcal{S}_{\mathrm{sc}}}
\widetilde{\phi}^{\mathrm{safe}}_{\tau}(s).
\end{equation*}
\end{definition}

\begin{definition}[Sound global bound]\label{def:sound_global_bound}
Let
\[
M := \max_{s \in \mathcal{S}_{\mathrm{sc}}}
|\mathcal{A}^{\mathrm{safe}}(s)|.
\]
Then
\begin{equation*}
\widetilde{\Phi}^{\mathrm{safe}}_{\tau, \mathrm{sc}}(\pi)
>
\frac{M}{1 + M}
\;\;\Rightarrow\;\;
\Phi^{\mathrm{safe}}_{\mathrm{sc}}(\pi) = 1.
\end{equation*}
\end{definition}

\paragraph{Key implication.}
The surrogate enables gradient-based optimisation while preserving a \emph{global safety certificate}: if the global surrogate constraint is satisfied with $\delta^{\star} = \frac{M}{1+M}$, the global hard safety specification constraint $\Phi_{sc}(\pi)=1$ is satisfied as well.

\subsection{Phase 3: Safe Downstream Adaptation}\label{sec:phase4}

We adapt to the downstream task by solving $\max_{\theta \in \Theta_{\mathrm{source}}^{\mathrm{safe}}} J_{\mathrm{down}}(\theta)$ via a gradient-based policy optimisation method (e.g. PPO) with an additional projection step. After each gradient step $\hat{\theta} = \theta + \eta\, g$, we project back via element-wise clipping:
\begin{equation}\label{eq:project}
    \theta_{\mathrm{down}} \leftarrow \mathrm{clip}\!\big(\hat{\theta},\;\; \theta_{\mathrm{source}} - \alpha^*,\;\; \theta_{\mathrm{source}} + \alpha^*\big).
\end{equation}

\subsection{Assumptions and Theoretical Guarantees}\label{sec:assumptions}

We now state the assumptions under which our safety guarantees hold.

\begin{assumption}[Finite discrete state--action space]\label{ass:finite}
The state space $\mathcal{S}$ and action space $\mathcal{A}$ are discrete and finite. This enables using unsafety labelling function
$U_{\mathrm{source}}$ to generate the sufficient safety demonstration dataset $D_{\mathrm{source}}^{\mathrm{safe}}$ for the source task.
\end{assumption}

\begin{assumption}[Existence of safe actions]\label{ass:safe-actions}
For every safety-critical state,
there exists at least one safe action:
$\mathcal{A}^{\mathrm{safe}}(s) \neq \emptyset \; \forall s \in \mathcal{S}_{\mathrm{sc}}$. Otherwise, the method cannot guarantee safe behaviour in any state.
\end{assumption}

\begin{assumption}[Safe source policy]\label{ass:safe-source}
The source-task policy $\pi_{\theta_{\mathrm{source}}}$ satisfies
the sound safety surrogate constraint at the required threshold:
$$\widetilde{\Phi}^{\mathrm{safe}}_{\tau, \mathrm{sc}}(\pi)
>
\frac{M}{1 + M}.$$
\end{assumption}
This enables building a Rashomon set, which is convex by design.

\begin{assumption}[Greedy action selection]\label{ass:greedy}
At deployment, the agent selects actions greedily, i.e. 
$a^{\star} = \argmax_{a' \in \mathcal{A}(s)} \pi_{\theta}(a' | s)$. This is a standard assumption about test-time policy deployment which also avoids formal analysis of stochastic sampling from $\pi_{\theta}(\cdot \mid s)$.
\end{assumption}

\noindent Assumption~\ref{ass:finite} restricts the formulation to discretised state-action spaces, ensuring that the safe demonstration dataset $D_{\mathrm{source}}^{\mathrm{safe}}$ can be constructed exhaustively. We note that this does not require manual enumeration: in practice, the unsafety labelling function is specified as a computable predicate grounded in domain knowledge (e.g., ``if adjacent to a hole, moving toward it is unsafe''), and the dataset is constructed programmatically. Assumption~\ref{ass:safe-actions} excludes states from which failure is unavoidable under any policy. Assumption~\ref{ass:safe-source} is a necessary precondition: no certified convex region can exist around a policy that is itself unsafe. Algorithm~\ref{alg:safe_continual_rl} checks this explicitly and returns $\bot$ if the condition is violated. Assumption~\ref{ass:greedy} avoids dealing with stochastic action sampling when an unsafe action can be drawn even if the greedy action is safe.

\begin{theorem}[Provably safe policy updates]
\label{thm:main_safeadapt}
Let Assumptions~1--4 hold, and let
\[
\Theta^{\mathrm{safe}}_{\mathrm{source}}
=
\{\theta' : \theta_{\mathrm{source}}-\alpha^\star \le \theta' \le \theta_{\mathrm{source}}+\alpha^\star\}
\]
be the certified orthotope returned by Phase~2 of Algorithm~1.
Let
\[
M := \max_{s \in \mathcal{S}_{\mathrm{sc}}} |\mathcal{A}^{\mathrm{safe}}(s)|,
\qquad
\delta^\star := \frac{M}{1+M}.
\]
Assume that the verification procedure used in Phase~2 provides a sound lower bound for the safety surrogate $\widetilde{\Phi}^{\mathrm{safe}}_{\tau,\mathrm{sc}}$ such that
\[
\min_{\theta' \in \Theta^{\mathrm{safe}}_{\mathrm{source}}}\widetilde{\Phi}^{\mathrm{safe}}_{\tau,\mathrm{sc}}(\pi_{\theta'}) > \delta^\star .
\]
Then, \(\forall \; \theta' \in \Theta^{\mathrm{safe}}_{\mathrm{source}}\),
\[
\Phi^{\mathrm{safe}}_{\mathrm{sc}}(\pi_{\theta'}) = 1.
\]
In particular, if Phase~3 updates the policy by projected gradient descent onto
\(\Theta^{\mathrm{safe}}_{\mathrm{source}}\), then in every iteration \(\theta_t\) satisfies
\[
\Phi^{\mathrm{safe}}_{\mathrm{sc}}(\pi_{\theta_t}) = 1.
\]
\end{theorem}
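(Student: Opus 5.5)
The plan is a pointwise argument in parameter space that chains the sound lower bound from Phase~2 with the per-state soundness property of Proposition~\ref{proposition:surrogate_properties}. Fix an arbitrary $\theta' \in \Theta^{\mathrm{safe}}_{\mathrm{source}}$. By the assumed soundness of the verification procedure (IBP lower bound),
\[
\widetilde{\Phi}^{\mathrm{safe}}_{\tau,\mathrm{sc}}(\pi_{\theta'}) \ge \min_{\theta'' \in \Theta^{\mathrm{safe}}_{\mathrm{source}}}\widetilde{\Phi}^{\mathrm{safe}}_{\tau,\mathrm{sc}}(\pi_{\theta''}) > \delta^\star .
\]
Since $\widetilde{\Phi}^{\mathrm{safe}}_{\tau,\mathrm{sc}}$ is the minimum over $\mathcal{S}_{\mathrm{sc}}$ (Definition~\ref{def:critical_state_safety_surrogate}), this gives $\widetilde{\phi}^{\mathrm{safe}}_\tau(s) > M/(1+M)$ for every $s \in \mathcal{S}_{\mathrm{sc}}$. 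This uses that $\mathcal{S}_{\mathrm{sc}}$ is finite (Assumption~\ref{ass:finite}), so the minimum is attained and a finite enumeration is what was certified.

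The second step is the per-state reduction. The map $k \mapsto k/(1+k)$ is increasing, and $|\mathcal{A}^{\mathrm{safe}}(s)| \le M$. Hence
\[
\widetilde{\phi}^{\mathrm{safe}}_\tau(s) > \frac{M}{1+M} \ge \frac{|\mathcal{A}^{\mathrm{safe}}(s)|}{1+|\mathcal{A}^{\mathrm{safe}}(s)|}.
\]
Assumption~\ref{ass:safe-actions} ensures $|\mathcal{A}^{\mathrm{safe}}(s)| \ge 1$, so the threshold is meaningful. The soundness item of Proposition~\ref{proposition:surrogate_properties} then yields $\phi^{\mathrm{safe}}(s) = 1$.

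The main subtlety is that this item must give the \emph{greedy} argmax. I would check it directly rather than rely on it blindly. Suppose the argmax $a^\star$ is unsafe. Then every safe action $a$ has $\pi(a|s;\tau) \le \pi(a^\star|s;\tau)$, so
\[
\widetilde{\phi}^{\mathrm{safe}}_\tau(s) \le |\mathcal{A}^{\mathrm{safe}}(s)|\,\pi(a^\star|s;\tau) \le |\mathcal{A}^{\mathrm{safe}}(s)|\,\bigl(1-\widetilde{\phi}^{\mathrm{safe}}_\tau(s)\bigr),
\]
which gives $\widetilde{\phi}^{\mathrm{safe}}_\tau(s) \le |\mathcal{A}^{\mathrm{safe}}(s)|/(1+|\mathcal{A}^{\mathrm{safe}}(s)|)$, a contradiction. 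The strict inequality also rules out a tie between a safe and an unsafe maximiser, so the no-ties assumption is harmless here.

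Two further points are needed. First, softmax with $\tau>0$ is monotone in the logits, so the argmax of $\pi_{\theta'}(\cdot|s)$ under greedy deployment (Assumption~\ref{ass:greedy}) coincides with the argmax of $z_{\mathcal{A}}(s)$ used in $\phi^{\mathrm{safe}}$. Second, averaging $\phi^{\mathrm{safe}}(s)=1$ over $\mathcal{S}_{\mathrm{sc}}$ gives $\Phi^{\mathrm{safe}}_{\mathrm{sc}}(\pi_{\theta'}) = 1$; this is exactly Definition~\ref{def:sound_global_bound}, now justified.

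For the ``in particular'' part, I would argue by induction on $t$. The base case is $\theta_0 = \theta_{\mathrm{source}} \in \Theta^{\mathrm{safe}}_{\mathrm{source}}$, since $\alpha^\star \ge 0$. For the inductive step, whatever the gradient step $\hat\theta$ is, the clipping in~\eqref{eq:project} satisfies $\theta_{\mathrm{source}}-\alpha^\star \le \mathrm{clip}(\hat\theta,\theta_{\mathrm{source}}-\alpha^\star,\theta_{\mathrm{source}}+\alpha^\star) \le \theta_{\mathrm{source}}+\alpha^\star$ coordinatewise, so $\theta_{t+1} \in \Theta^{\mathrm{safe}}_{\mathrm{source}}$. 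The first part then applies to every iterate. The only genuine obstacle is the soundness inequality above; the rest is bookkeeping.
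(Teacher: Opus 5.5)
Your proposal is correct and follows the same route as the paper's proof. You reduce the certified global bound to a per-state bound via the minimum over $\mathcal{S}_{\mathrm{sc}}$, compare thresholds using monotonicity of $x \mapsto x/(1+x)$, invoke the soundness item of Proposition~\ref{proposition:surrogate_properties}, average over $\mathcal{S}_{\mathrm{sc}}$, and use that clipping keeps every iterate in the orthotope. Your version is slightly more careful than the written proof in three places: you use $|\mathcal{A}^{\mathrm{safe}}(s)| \le M$ where the paper writes a strict inequality (which fails at the state attaining $M$, although the strict $>\delta^\star$ still gives the conclusion); you re-derive the soundness item directly; and you make the base case $\theta_0=\theta_{\mathrm{source}}$ explicit.
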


\begin{proof}
Take any \(\theta' \in \Theta^{\mathrm{safe}}_{\mathrm{source}}\) and any \(s \in \mathcal{S}_{\mathrm{sc}}\).
By Definition~4.5,
\[
\widetilde{\Phi}^{\mathrm{safe}}_{\tau,\mathrm{sc}}(\pi_{\theta'})
=
\min_{s' \in \mathcal{S}_{\mathrm{sc}}} \widetilde{\phi}^{\mathrm{safe}}_{\tau}(s';\pi_{\theta'}),
\]
hence
\[
\widetilde{\phi}^{\mathrm{safe}}_{\tau}(s;\pi_{\theta'})
\ge
\widetilde{\Phi}^{\mathrm{safe}}_{\tau,\mathrm{sc}}(\pi_{\theta'}).
\]
Because the verification lower bound is sound over \(\Theta^{\mathrm{safe}}_{\mathrm{source}}\),
\[
\widetilde{\Phi}^{\mathrm{safe}}_{\tau,\mathrm{sc}}(\pi_{\theta'})
\ge
\min_{\theta \in \Theta^{\mathrm{safe}}_{\mathrm{source}}}{\widetilde{\Phi}}^{\mathrm{safe}}_{\tau,\mathrm{sc}}(\pi_{\theta})
>
\delta^\star
=
\frac{M}{1+M}.
\]
Since \(|\mathcal{A}^{\mathrm{safe}}(s)| < M\) and the map \(x \mapsto x/(1+x)\) is increasing on
\([0,\infty)\),
\[
\frac{M}{1+M}
>
\frac{|\mathcal{A}^{\mathrm{safe}}(s)|}{1+|\mathcal{A}^{\mathrm{safe}}(s)|}.
\]
Therefore
\[
\widetilde{\phi}^{\mathrm{safe}}_{\tau}(s;\pi_{\theta'})
>
\frac{|\mathcal{A}^{\mathrm{safe}}(s)|}{1+|\mathcal{A}^{\mathrm{safe}}(s)|}.
\]
By Proposition~\ref{proposition:surrogate_properties}(3), this implies
\[
\phi^{\mathrm{safe}}(s;\pi_{\theta'}) = 1.
\]
Since this holds for every \(s \in \mathcal{S}_{\mathrm{sc}}\), Definition~4.3 yields
\[
\Phi^{\mathrm{safe}}_{\mathrm{sc}}(\pi_{\theta'}) = 1.
\]
Finally, Phase~3 projects each update back into \(\Theta^{\mathrm{safe}}_{\mathrm{source}}\), so every
iterate \(\theta_t\) remains in \(\Theta^{\mathrm{safe}}_{\mathrm{source}}\); applying the same argument to
each \(\theta_t\) proves the second claim.
\end{proof}

\begin{corollary}[Per-state certified safety]\label{cor:per_state_certified_safety}
Under the assumptions of Theorem~\ref{thm:main_safeadapt}, for every
\(\theta' \in \Theta^{\mathrm{safe}}_{\mathrm{source}}\) and every \(s \in \mathcal{S}_{\mathrm{sc}}\),
\[
\arg\max_{a \in \mathcal{A}(s)} \pi_{\theta'}(a \mid s) \in \mathcal{A}^{\mathrm{safe}}(s).
\]
\end{corollary}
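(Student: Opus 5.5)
This corollary is essentially an unpacking of the conclusion of Theorem~\ref{thm:main_safeadapt}. The plan is to start from that conclusion and translate it, through the definitions, into the per-state statement.

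First, fix any \(\theta' \in \Theta^{\mathrm{safe}}_{\mathrm{source}}\). Theorem~\ref{thm:main_safeadapt} gives \(\Phi^{\mathrm{safe}}_{\mathrm{sc}}(\pi_{\theta'}) = 1\). By Definition~\ref{def:critical_state_safety_rate}, this quantity is the average of the indicators \(\phi^{\mathrm{safe}}(s)\in\{0,1\}\) over the finite, nonempty set \(\mathcal{S}_{\mathrm{sc}}\) (finite by Assumption~\ref{ass:finite}). An average of \(\{0,1\}\)-valued terms equals \(1\) only if every term equals \(1\). Hence \(\phi^{\mathrm{safe}}(s;\pi_{\theta'})=1\) for every \(s\in\mathcal{S}_{\mathrm{sc}}\).

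A cleaner route skips averaging altogether and reuses the per-state inequality from the proof of the theorem. For each \(s\), we have \(\widetilde{\phi}^{\mathrm{safe}}_{\tau}(s;\pi_{\theta'}) > |\mathcal{A}^{\mathrm{safe}}(s)|/(1+|\mathcal{A}^{\mathrm{safe}}(s)|)\). Proposition~\ref{proposition:surrogate_properties}(3) then gives \(\phi^{\mathrm{safe}}(s;\pi_{\theta'})=1\) directly.

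Second, unfold Definition~\ref{def:hard_safety_specification}: \(\phi^{\mathrm{safe}}(s)=1\) means \(\argmax_{a\in\mathcal{A}(s)} z_a(s)\in\mathcal{A}^{\mathrm{safe}}(s)\). This must be transferred from the logits to \(\pi_{\theta'}(a\mid s)\), which is what the corollary's \(\argmax\) refers to. For any temperature, the softmax \(a\mapsto e^{z_a/\tau}/\sum_{a'}e^{z_{a'}/\tau}\) is a strictly increasing function of \(z_a\), with the denominator common to all actions. So the \(\argmax\) over actions of \(\pi_{\theta'}(\cdot\mid s)\) coincides with the \(\argmax\) of the logits, and the no-ties assumption makes both well defined and single-valued. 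This yields \(\argmax_{a}\pi_{\theta'}(a\mid s)\in\mathcal{A}^{\mathrm{safe}}(s)\).

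The only delicate point is this identification of the \(\argmax\) of the deployed policy with the \(\argmax\) of the logits \(z_a(s)\). It relies on the policy being a softmax of those logits and on the no-ties convention. I would state both explicitly, rather than re-examine the surrogate bounds, since everything else follows immediately from the theorem.
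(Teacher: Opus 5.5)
Your proposal is correct and follows the paper's proof. Like the paper, you invoke Theorem~\ref{thm:main_safeadapt} to get $\Phi^{\mathrm{safe}}_{\mathrm{sc}}(\pi_{\theta'})=1$, observe that an average of $\{0,1\}$-valued terms equals $1$ only if every term does, and then unfold Definition~\ref{def:hard_safety_specification}. Your extra remark is a worthwhile addition: softmax at any temperature is strictly increasing in the logits, so under no ties the $\arg\max$ of the logits coincides with the $\arg\max$ of $\pi_{\theta'}(\cdot\mid s)$. This matters because the surrogate's temperature $\tau$ can differ from the deployed one, and the paper compresses the step into the single word ``equivalent''.
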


\begin{corollary}[Safety preservation throughout adaptation]\label{cor:safety_preservation_throughout_adaptation}
Let \((\theta_t)_{t \ge 0}\) be the sequence of policy parameters produced by
Phase~3. Then \(\theta_t \in \Theta^{\mathrm{safe}}_{\mathrm{source}}\) for all \(t\), and therefore
\[
\Phi^{\mathrm{safe}}_{\mathrm{sc}}(\pi_{\theta_t}) = 1
\qquad \forall t \ge 0.
\]
\end{corollary}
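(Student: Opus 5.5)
The statement to prove is Corollary~\ref{cor:safety_preservation_throughout_adaptation}. The argument has two parts. First, an induction on $t$ shows that the iterates never leave $\Theta^{\mathrm{safe}}_{\mathrm{source}}$. Then Theorem~\ref{thm:main_safeadapt} converts membership in that set into the safety conclusion. The assumptions of Theorem~\ref{thm:main_safeadapt} are in force throughout, since the corollary refers to the output of Phases~2 and~3. In particular, the sound IBP lower bound on $\widetilde{\Phi}^{\mathrm{safe}}_{\tau,\mathrm{sc}}$ over the orthotope exceeds $\delta^\star = M/(1+M)$.

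\emph{Base case.} Phase~3(a) sets $\theta_0 = \theta_{\mathrm{source}}$. Because $\alpha^\star \ge 0$ componentwise, we have $\theta_{\mathrm{source}} - \alpha^\star \le \theta_0 \le \theta_{\mathrm{source}} + \alpha^\star$, so $\theta_0 \in \Theta^{\mathrm{safe}}_{\mathrm{source}}$. The set is nonempty, since Phase~2 only returns a set when Assumption~\ref{ass:safe-source} holds; otherwise it returns $\bot$ and there are no iterates.

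\emph{Inductive step.} Suppose $\theta_t \in \Theta^{\mathrm{safe}}_{\mathrm{source}}$. Any optimiser step (PPO or otherwise) produces some $\hat\theta = \theta_t + \eta g$ with $\hat\theta \in \mathbb{R}^p$ arbitrary. Phase~3 then sets $\theta_{t+1}$ to the clipped value in \eqref{eq:project}. For each coordinate $i$, $\mathrm{clip}(\hat\theta_i, \ell_i, u_i) = \min(\max(\hat\theta_i,\ell_i),u_i)$, with $\ell_i = \theta_{\mathrm{source},i}-\alpha^\star_i \le u_i = \theta_{\mathrm{source},i}+\alpha^\star_i$, always lies in $[\ell_i,u_i]$. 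Since $\Theta^{\mathrm{safe}}_{\mathrm{source}}$ is exactly the product of these intervals, $\theta_{t+1} \in \Theta^{\mathrm{safe}}_{\mathrm{source}}$. This step does not even need the inductive hypothesis. It only uses that the projection onto an axis-aligned box is coordinatewise clipping. Its value is that the guarantee does not depend on the magnitude or direction of the update $g$.

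\emph{Conclusion.} Now apply Theorem~\ref{thm:main_safeadapt} to $\theta' = \theta_t$ for each $t$. Its first conclusion gives $\Phi^{\mathrm{safe}}_{\mathrm{sc}}(\pi_{\theta_t}) = 1$, which is the claim.

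The main point of care is checking that the soundness hypothesis of the theorem applies to every iterate. The IBP bound is computed once, over the whole orthotope, and not at individual parameter values. It therefore covers every point of the box uniformly, including points the optimiser reaches later. No step is genuinely hard; the only delicate part is noting that the certificate is a property of the set rather than of any particular trajectory, which is what makes the guarantee hold for all $t \ge 0$.
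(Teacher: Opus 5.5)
Your proposal is correct and follows the same route as the paper: every update is followed by projection onto the orthotope, so the iterates stay inside it, and Theorem~\ref{thm:main_safeadapt} is then applied to each $\theta_t$. Your version is slightly more careful than the paper's. You explicitly handle the initial iterate $\theta_0=\theta_{\mathrm{source}}$, which lies in the box because $\alpha^\star\ge 0$, and you check that coordinatewise clipping lands in the box for an arbitrary optimiser step; the paper leaves both points implicit.
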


\begin{corollary}[Distribution-independent source-task safety]\label{cor:distribution_independent_source_task_safety}
Under the assumptions of Theorem~\ref{thm:main_safeadapt}, greedy execution of
\(\pi_{\theta_t}\) on the source task never selects an unsafe action at any iteration
\(t\). Consequently, for any initial state distribution \(\mu_0\), the source-task
occupancy measure of \(\pi_{\theta_t}\) satisfies
\[
\mathbb{E}_{(s,a)\sim d^{\pi_{\theta_t}}}\!\left[U_{\mathrm{source}}(s,a)\right] = 0.
\]
\end{corollary}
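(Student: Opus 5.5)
The plan is to derive the statement from Corollary~\ref{cor:per_state_certified_safety} and Corollary~\ref{cor:safety_preservation_throughout_adaptation}, and then convert the resulting per-state action guarantee into a statement about the occupancy measure. Fix an iteration $t$. By Corollary~\ref{cor:safety_preservation_throughout_adaptation}, $\theta_t \in \Theta^{\mathrm{safe}}_{\mathrm{source}}$. By Corollary~\ref{cor:per_state_certified_safety}, greedy execution gives $a_t^\star(s) := \argmax_{a} \pi_{\theta_t}(a\mid s) \in \mathcal{A}^{\mathrm{safe}}(s)$ for every $s \in \mathcal{S}_{\mathrm{sc}}$. Definition~\ref{def:safe_action_set} then yields $U_{\mathrm{source}}(s, a_t^\star(s)) = 0$ on $\mathcal{S}_{\mathrm{sc}}$.

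For states $s \notin \mathcal{S}_{\mathrm{sc}}$, the definition of safety-critical states says no action has $U_{\mathrm{source}}(s,a)=1$. Hence $U_{\mathrm{source}}(s,a_t^\star(s)) = 0$ there as well, for whatever greedy action is chosen. So $U_{\mathrm{source}}(s, a_t^\star(s)) = 0$ for every $s \in \mathcal{S}$, which is the first claim: greedy execution never selects an unsafe action.

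For the occupancy statement, I will interpret $d^{\pi_{\theta_t}}$ as the discounted (or visitation) state--action measure induced by the \emph{greedy deterministic} deployment of $\pi_{\theta_t}$ from an arbitrary $\mu_0$, consistent with Assumption~\ref{ass:greedy}. Under this policy the conditional action distribution at each $s$ is the point mass at $a_t^\star(s)$. Therefore
\[
\mathbb{E}_{(s,a)\sim d^{\pi_{\theta_t}}}[U_{\mathrm{source}}(s,a)] = \sum_{s} d^{\pi_{\theta_t}}(s)\, U_{\mathrm{source}}(s,a_t^\star(s)) = 0 .
\]
This holds for every state marginal, because the integrand vanishes identically. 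By Assumption~\ref{ass:finite} the sum is finite, so no measure-theoretic issues arise, and the conclusion is independent of $\mu_0$ and of the dynamics $P$.

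The main obstacle is the interpretive step: if $d^{\pi_{\theta_t}}$ were the occupancy of the \emph{stochastic} policy, then unsafe actions with positive probability would break the claim. I would therefore state explicitly that the occupancy is taken under greedy deployment, and invoke Assumption~\ref{ass:greedy}. A secondary point is the no-ties caveat in the $\argmax$. I would handle it by noting that the strict surrogate inequality in Proposition~\ref{proposition:surrogate_properties}(3) forces the maximal logit to be attained only within $\mathcal{A}^{\mathrm{safe}}(s)$, so any tie-breaking among maximisers still picks a safe action.
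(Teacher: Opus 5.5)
Your proposal is correct and follows essentially the same route as the paper: apply Corollary~\ref{cor:per_state_certified_safety} on safety-critical states, use the definition of $\mathcal{S}_{\mathrm{sc}}$ to get $U_{\mathrm{source}}(s,a)=0$ for every action at all other states, and conclude that $U_{\mathrm{source}}$ vanishes on the support of the occupancy measure. You also make explicit two points the paper's proof leaves implicit: that $d^{\pi_{\theta_t}}$ must be the occupancy of the greedy deployment, and that the strict inequality behind Proposition~\ref{proposition:surrogate_properties}(3) makes tie-breaking harmless.
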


Note that Theorem~\ref{thm:main_safeadapt} guarantees preservation of source-task safety, but it does not guarantee that either source or downstream goal-reaching policy exists inside the certified region. Whether such a policy exists depends on the overlap between the certified safe set and high-performing source-task and downstream-task policies.

\section{Experiments}\label{sec:experiments}

We consider continual adaptation from a source task (Task~1) to a downstream task (Task~2) in discrete-state, discrete-action environments with known unsafe state-action pairs. In all experiments, \(\pi_{\text{source}}\) is safe on the source-task safety dataset before adaptation. In our finite-state settings, this corresponds to full safety coverage of the source safety-critical states. We compare four policies:
\begin{itemize}
    \item \textbf{Source}: train the policy on the source task and without updating parameters to the downstream task.
    \item \textbf{UnsafeAdapt}: unconstrained PPO fine-tuning on Task~2.
    \item \textbf{EWC}: PPO fine-tuning on Task~2 with EWC regularisation.
    \item \textbf{\textsc{SafeAdapt} (ours)}: PPO fine-tuning on Task~2 with per-update projection onto the Rashomon set, which is a parameter set certified to be safe in the source task.
\end{itemize}

\paragraph{Policy training.}
In each experiment run, policies are trained with PPO \citep{schulman2017ppo} using the same MLP actor and critic with softmax over action logits. This ensures that the expressiveness of the neural policy is the same across methods.
We run 10 seeds and report mean \(\pm\) standard deviation. Detailed settings are specified in Tables~\ref{tab:stage1_source}-\ref{tab:stage3_ewc}.

\paragraph{Metrics.} For each environment and policy \(\pi\), we calculate the following metrics for comparison:
\begin{itemize}
    \item \textbf{Critical-state safety rate} $\Phi^{\mathrm{safe}}_{\mathrm{sc}}(\pi)$: fraction of safety-critical states in which the policy is safe.
    \item \textbf{Trajectory safety rate} $\Phi^{\mathrm{safe}}_{\mathrm{traj.}}(\pi)$: fraction of episodes in which the policy trajectory is safe.
    \item \textbf{Total reward}: Episodic return of the policy.
    \item \textbf{Success rate}: environment-defined task completion rate.
\end{itemize}
\looseness=-1
In each experiment run, our method \textsc{SafeAdapt} is the only method that provides a formal guarantee that $\Phi^{\mathrm{safe}}_{\mathrm{sc}}(\pi) = 1$ in a source task.

\subsection{Environments}

\subsubsection{Frozen Lake}\label{sec:exp_frozenlake}
Frozen Lake is a grid-world Gymnasium environment \citep{towers2025gymnasium} where the agent must reach the goal while avoiding holes in a frozen lake. State is represented with a one-hot encoded location and task indicator, and there are 4 actions available (go up, down, left, or right). Reward is \(+1\) at goal and \(0\) otherwise. Task~1 and Task~2 differ by hole placement. Falling into a hole is unsafe, and an episode is terminated upon this event. 

\looseness=-1
\subsubsection{Poisoned Apple}\label{sec:exp_poisoned_apple}
Poisoned Apple is a custom grid world in which an agent's goal is to collect safe apples while avoiding poisoned ones. Collecting a safe apple yields \(+1\), collecting a poisoned apple yields \(-1\) and is unsafe. A state is represented as flattened grid with locations of safe and poisoned apples indicated therein. There are 4 available actions: up, down, left and right. Task~1 and Task~2 differ in safe/poisoned apple placement. A trajectory is safe if no poisoned apple is collected. Once all safe apples are collected, the environment terminates.\\ 

\looseness=-1
Figure~\ref{fig:envs_demonstration} demonstrates the environments and their corresponding source and downstream tasks. Table~\ref{tab:frozen_poison_comparison} showcases that we have a setup which tests \textsc{SafeAdapt} across heterogeneous environments. Frozen Lake is a Task-Incremental Learning (TIL) setting, since the task ID is included in the state and the policy can condition explicitly on task context. In contrast, Poisoned Apple is a Domain-Incremental Learning (DIL) setting, where adaptation must occur without explicit task identity, making it a harder regime than TIL. We use small discrete environments because they permit a straightforward construction of the safety-critical-state dataset and end-to-end validation of the deterministic certificate under the assumptions of Theorem 1.
\begin{figure}[ht]
    \centering
    \includegraphics[width=0.7\linewidth]{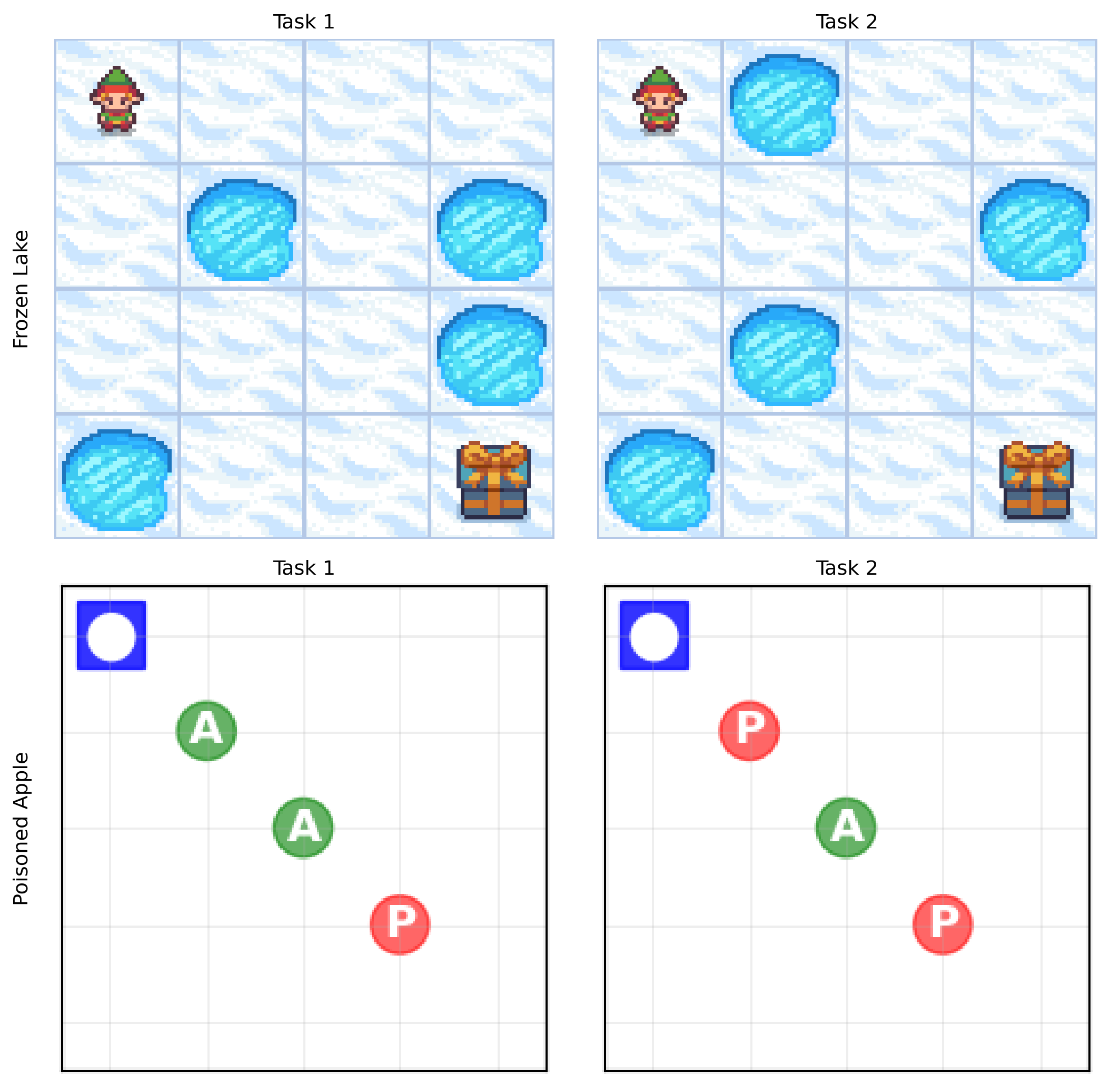}
    \caption{Examples of experiment environments: Frozen Lake \texttt{standard\_4x4} (first row) and Poisoned Apple  \texttt{simple\_5x5} (second row).}
    \label{fig:envs_demonstration}
\end{figure}

\begin{table}[ht]
\centering
\small
\caption{Frozen Lake vs Poisoned Apple experiment structure.}
\label{tab:frozen_poison_comparison}
\begin{tabular}{lcc}
\toprule
Setting & Frozen Lake & Poisoned Apple \\
\midrule
Task ID in state representation & Yes & No \\
Termination when unsafe & Yes & No \\
Single destination & Yes & No \\
\bottomrule
\end{tabular}
\end{table}

\subsection{Results}

\subsubsection{Safety retention in the source task.} Table~\ref{tab:safety_retention} demonstrates how our method (\textsc{SafeAdapt}) retains agent's safety in the source task after the update to a new task. Out of all adaptation methods, only \textsc{SafeAdapt} has the average critical state safety rate $\Phi^{\mathrm{safe}}_{\mathrm{sc}}(\pi)$ of 1. That is, UnsafeAdapt and EWC methods \emph{forget} how to be safe in some safety-critical states. The average trajectory safety rate $\Phi^{\mathrm{safe}}_{\mathrm{traj.}}(\pi)$ of \textsc{SafeAdapt} agent is 1, which follows from \textsc{SafeAdapt}'s safety in all safety-critical states. UnsafeAdapt and EWC agents have unsafe policy trajectories in the source task in Frozen Lake environment, while their trajectories are safe in Poisoned Apple. Finally, the Total Reward column demonstrates degradation of agent's performance in the source task across all adaptation methods. However, due to the nature of Frozen Lake and Poisoned Apple, safety retention allows the \textsc{SafeAdapt} to maintain competitive performance in the source task compared to other adaptation methods.

\begin{table*}[ht]
\centering
\setlength{\tabcolsep}{3pt}
\caption{
Source Task safety and performance metrics in Frozen Lake and Poisoned Apple (mean $\pm$ standard deviation over 10 seeds). We report (i) critical state safety rate $\Phi^{\mathrm{safe}}_{\mathrm{sc}}(\pi)$, measuring the fraction of safety-critical states in which the policy selects only safe actions, (ii) trajectory-level safety $\Phi^{\mathrm{safe}}_{\mathrm{traj.}}(\pi)$, measuring the probability of executing a fully safe trajectory, and (iii) total reward. 
Standard continual learning baselines (UnsafeAdapt and EWC) may violate safety constraints, as reflected by degraded $\Phi^{\mathrm{safe}}_{\mathrm{sc}}$ in both Frozen Lake and Poisoned Apple and degraded $\Phi^{\mathrm{safe}}_{\mathrm{traj.}}$ in Frozen Lake. In contrast, \textsc{SafeAdapt} (ours) consistently retains perfect safety ($\Phi^{\mathrm{safe}}_{\mathrm{sc}} = \Phi^{\mathrm{safe}}_{\mathrm{traj.}} = 1$) across all environments while maintaining competitive total reward in the source task. 
}
\label{tab:safety_retention}
\begin{tabularx}{\textwidth}{
llc 
>{\centering\arraybackslash}X 
>{\centering\arraybackslash}X 
>{\centering\arraybackslash}X
}
\toprule
Environment & Policy & Provably Safe? & $\Phi^{\mathrm{safe}}_{\mathrm{sc}}(\pi)$ & $\Phi^{\mathrm{safe}}_{\mathrm{traj.}}(\pi)$ & Total Reward \\
\midrule

\multirow{4}{*}{Frozen Lake (\texttt{standard\_4x4})} & Source & \checkmark & $\mathbf{1.00 \pm 0.00}$ & $\mathbf{1.00 \pm 0.00}$ & $\mathbf{1.00 \pm 0.00}$ \\
 & UnsafeAdapt & $\times$ & $0.88 \pm 0.00$ & $0.10 \pm 0.30$ & $0.00 \pm 0.00$ \\
 & EWC & $\times$ & $0.94 \pm 0.06$ & $0.60 \pm 0.49$ & $0.30 \pm 0.46$ \\
 & \textsc{SafeAdapt} (ours) & \checkmark & $\mathbf{1.00 \pm 0.00}$ & $\mathbf{1.00 \pm 0.00}$ & $0.90 \pm 0.30$ \\
\midrule

\multirow{4}{*}{Poisoned Apple (\texttt{simple\_5x5})} 
 & Source & \checkmark & $\mathbf{1.00 \pm 0.00}$ & $\mathbf{1.00 \pm 0.00}$ & $\mathbf{1.96 \pm 0.00}$ \\
 & UnsafeAdapt & $\times$ & $0.93 \pm 0.11$ & $\mathbf{1.00 \pm 0.00}$ & $0.91 \pm 0.00$ \\
 & EWC & $\times$ & $0.93 \pm 0.11$ & $\mathbf{1.00 \pm 0.00}$ & $1.02 \pm 0.32$ \\
 & \textsc{SafeAdapt} (ours) & \checkmark & $\mathbf{1.00 \pm 0.00}$ & $\mathbf{1.00 \pm 0.00}$ & $0.91 \pm 0.00$ \\
 \midrule
 
\bottomrule
\end{tabularx}
\end{table*}
\looseness=-1
\subsubsection{Adaptation to downstream task.} Table~\ref{tab:plasticity_analysis} shows that \textsc{SafeAdapt} retains substantial downstream plasticity despite the hard source-task safety constraint. In Frozen Lake, all three adaptation methods achieve optimal downstream performance, so the relevant distinction is not Task-2 total reward but whether that reward is obtained without sacrificing source-task safety. In this respect, \textsc{SafeAdapt} is strictly preferable, as it matches the best downstream performance while being the only method with a source-task safety certificate. In Poisoned Apple, \textsc{SafeAdapt} again matches the best-performing baseline, achieving the same average total reward as UnsafeAdapt (0.96) and outperforming EWC (0.86). Overall, these results show that constraining updates to the certified Rashomon set can allow for effective downstream adaptation and can preserve source-task safety without sacrificing downstream performance.

\begin{table*}[ht]
\centering
\caption{Downstream Task performance metrics in Frozen Lake and Poisoned Apple (mean $\pm$ standard deviation over 10 seeds).}
\label{tab:plasticity_analysis}
\begin{tabularx}{\textwidth}{
l l 
>{\centering\arraybackslash}X 
>{\centering\arraybackslash}X
}
\toprule
Environment & Policy & Total Reward & Success Rate \\
\midrule

\multirow{4}{*}{Frozen Lake (\texttt{standard\_4x4})} & Source & $0.00 \pm 0.00$ & $0.00 \pm 0.00$ \\
 & UnsafeAdapt & $\mathbf{1.00 \pm 0.00}$ & $\mathbf{1.00 \pm 0.00}$ \\
 & EWC & $\mathbf{1.00 \pm 0.00}$ & $\mathbf{1.00 \pm 0.00}$ \\
 & \textsc{SafeAdapt} (ours) & $\mathbf{1.00 \pm 0.00}$ & $\mathbf{1.00 \pm 0.00}$ \\
\midrule

\multirow{4}{*}{Poisoned Apple (\texttt{simple\_5x5})} 
& Source & $-0.04 \pm 0.00$ & $\mathbf{1.00 \pm 0.00}$ \\
& UnsafeAdapt & $\mathbf{0.96 \pm 0.00}$ & $\mathbf{1.00 \pm 0.00}$ \\
& EWC & $0.86 \pm 0.30$ & $\mathbf{1.00 \pm 0.00}$ \\
& \textsc{SafeAdapt} (ours) & $\mathbf{0.96 \pm 0.00}$ & $\mathbf{1.00 \pm 0.00}$ \\
 
\bottomrule
\end{tabularx}
\end{table*}
\looseness=-1
\subsubsection{Scalability analysis}
To study scalability, we evaluate how \textsc{SafeAdapt} behaves as the Frozen Lake layout size increases. Figure~\ref{fig:scalability_analysis} shows a stability--plasticity trade-off across scales. The source policy has perfect Task-1 safety and total reward, but fails completely on Task~2. UnsafeAdapt and EWC exhibit strong downstream performance, yet incur substantial degradation on the source task. Namely, UnsafeAdapt exhibits a large drop in critical-state safety and zero trajectory safety rate, while EWC retains partial source-task safety and near-zero Task-1 total reward on the larger layouts. In contrast, \textsc{SafeAdapt} maintains perfect source-task safety across all sizes and remains competitive on the downstream task, with only a modest drop at 8×8. To keep the comparison controlled, we use the same policy architecture and training configuration across all Frozen Lake layouts. Larger layouts may benefit from increased model capacity and optimised training settings. Exploring scaling of the environment, network architecture, and training procedure is a potential direction for future work.

\begin{figure}[ht]
    \centering
    \includegraphics[width=1\linewidth]{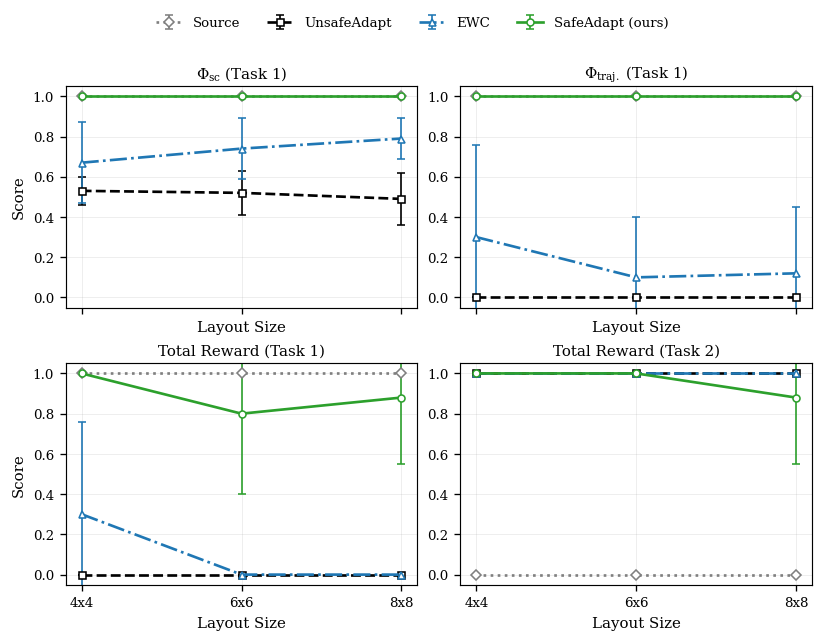}
    \caption{Scalability analysis across diagonal Frozen Lake layouts.
    }
    \label{fig:scalability_analysis}
\end{figure}

Appendix~\ref{appendix:Rashomon_set_vis} visualises the Rashomon set for Frozen Lake. We leave analysis of certified-region size and its dependence on neural network architecture to future work.

\section{Conclusion}\label{sec:conclusion}

We introduced \textsc{SafeAdapt}, an approach to provably safe policy updates in continual deep reinforcement learning. The key idea is to compute a certified Rashomon set in policy parameter space around a safe source-task policy and to constrain downstream adaptation to remain within this set via projected gradient descent. Our method yields an \emph{a priori} guarantee that source-task safety is preserved throughout downstream adaptation, rather than only being checked \emph{a posteriori} or enforced by a runtime intervention mechanism.

Empirically, we showed across grid-world environments Frozen Lake and Poisoned Apple that \textsc{SafeAdapt} is the only adaptation method in our study that preserves perfect source-task safety after updates to a downstream task, while exhibiting competitive downstream performance. In contrast, unconstrained adaptation (UnsafeAdapt) and EWC can both exhibit catastrophic forgetting of safe behaviour in a source task while adapting to a new task. The scalability study further suggests that constrained adaptation within the safe region may exhibit trade-off between safety retention and downstream plasticity as environment size increases.

The current formulation is limited to finite discrete state--action spaces as it relies on exhaustive coverage of safety-critical states to obtain safety guarantees. In addition, the certified parameter region may become conservative when using IBP, especially for larger networks or more complex environments. Extending the framework to infinite and continuous state-action spaces and understanding how certified regions behave in larger environments and across longer task sequences are important directions for future work. Another interesting idea is to derive the unsafety labelling function from a synthesised shield, which could extend our framework to more expressive multi-step safety specifications. Finally, probabilistic verification is a complementary direction for more complex settings where deterministic parameter-space certification becomes too conservative or intractable.

Overall, our results show that parameter-space certification is a viable route to preventing safety forgetting in continual RL and provide, to our knowledge, the first parameter-space certificate for preserving source-task safety during downstream adaptation.

\begin{acks}
This work was supported by the UKRI Centre for Doctoral Training in Safe and Trusted AI [EP/S0233356/1] and the EPSRC grant number EP/X015823/1, "An abstraction-based Technique for Safe Reinforcement Learning”.

\end{acks}

\newpage

   \bibliographystyle{ACM-Reference-Format}
   \bibliography{paper/references/references}
%







\newpage

\appendix

\section{Methodology details}
Here we provide some additional details on our method \textsc{SafeAdapt}.
\subsection{Safety surrogate sound constraint}

Consider a multi-label classification setting with $K$ actions, and without loss of generality, assume the number of actions is the same for each state, i.e. $K = |\mathcal{A}| = |\mathcal{A}(s)|$ $\forall s \in \mathcal{S}$. Also, without loss of generality, assume in each state $M$ actions are safe, i.e. $|\mathcal{A}^{\mathrm{safe}}(s)| = M$ $\forall s \in \mathcal{S}$. Let $\pi(a \mid s)$ denote the policy's probability of selecting action $a$ in state $s$.

The \emph{hard safety specification} is defined as:
\begin{equation*}
    \phisafe(s) = \ind\!\left\{ \argmax_{a}\, \pi(a \mid s) \in \Asafe(s) \right\},
\end{equation*}
which equals 1 if and only if the greedy action under the policy is safe. $\argmax$ and indicator function make $\phisafe$ non-differentiable. To enable gradient-based optimisation, we introduce a safety surrogate, which is defined as:
\begin{equation*}
    \tilde{\phi}^{\mathrm{safe}}(s) = \sum_{a \in \Asafe(s)} \pi(a \mid s),
\end{equation*}
which represents the total softmax probability mass assigned to safe actions.

\subsection{Proof for Proposition~\ref{proposition:surrogate_properties}(3)}
\begin{proof}[Proof of Proposition~\ref{proposition:surrogate_properties}(3)]
    
    From the inequality
    $$
    \sum_{a\in \mathcal{A}^{\mathrm{safe}}(s)} \pi(a| s) >\dfrac{|\mathcal{A}^{\mathrm{safe}}(s)|}{1 + |\mathcal{A}^{\mathrm{safe}}(s)|},
    $$
    we get:
    $$
    \dfrac{\sum_{a\in \mathcal{A}^{\mathrm{safe}}(s)} \pi(a| s)}{|\mathcal{A}^{\mathrm{safe}}(s)|} >\dfrac{1}{1 + |\mathcal{A}^{\mathrm{safe}}(s)|}.
    $$
    Since the average safe action probability, exceeds $\frac{1}{1 + |\mathcal{A}^{\mathrm{safe}}(s)|}$, the maximum safe action probability must exceed $\frac{1}{1 + |\mathcal{A}^{\mathrm{safe}}(s)|}$ as well:
    \begin{equation}\label{eq:max_safe_action_prob}
        \max_{a \in \mathcal{A}^{\mathrm{safe}}(s)}\pi(a|s) > \dfrac{1}{1 + |\mathcal{A}^{\mathrm{safe}}(s)|}
    \end{equation}

    Since the sum of safe action probabilities exceeds $\frac{|\mathcal{A}^{\mathrm{safe}}(s)|}{1 + |\mathcal{A}^{\mathrm{safe}}(s)|}$, the sum of unsafe action probabilities cannot exceed $\frac{1}{1 + |\mathcal{A}^{\mathrm{safe}}(s)|}$:

    $$\sum_{a\notin \mathcal{A}^{\mathrm{safe}}(s)} \pi(a| s) < \dfrac{1}{1 + |\mathcal{A}^{\mathrm{safe}}(s)|}$$ 

    The upper bound for the sum of non-negative values is also the upper bound for the maximum term in the sum:
    \begin{equation}\label{eq:max_unsafe_action_prob}
        \max_{a \notin \mathcal{A}^{\mathrm{safe}}(s)} \pi(a | s) < \dfrac{1}{1 + |\mathcal{A}^{\mathrm{safe}}(s)|}
    \end{equation}
    
    From Eqs.~\ref{eq:max_safe_action_prob} and \ref{eq:max_unsafe_action_prob}, we get:

    \begin{equation*}
         \max_{a \in \mathcal{A}^{\mathrm{safe}}(s)}\pi(a|s) > \max_{a \notin \mathcal{A}^{\mathrm{safe}}(s)} \pi(a | s)
    \end{equation*}
\end{proof}

As the number of safe actions per state goes to infinity ($|\mathcal{A}^{\mathrm{safe}}(s)| \rightarrow \infty$), the surrogate sound constraint becomes very conservative since the threshold approaches 1.

\subsection{Relationship Between $\tilde{\phi}^{\mathrm{safe}}$ and $\phisafe$}

We proved that if the safety surrogate exceeds the threshold $\delta^{\star} = \frac{|\mathcal{A}^{\mathrm{safe}}(s)|}{1 +| \mathcal{A}^{\mathrm{safe}}(s)|}$, the policy is guaranteed to be safe. Note that this condition is sufficient but not necessary.

\begin{proof}[Proof that $\delta^{\star}$ implies a non-necessary constraint.]
Let us provide an example. Consider a state in which one action is safe and 4 actions are unsafe: $|\mathcal{A}(s)|=5$, $|\mathcal{A}^{\mathrm{safe}}(s)|=1$. Then $\delta^{\star} = \frac{1}{2}$. However, a policy with the following action probability distribution is safe:

\begin{equation}
\pi(a|s)=
\begin{cases}
    0.25,  & a \in \mathcal{A}^{\mathrm{safe}}(s), \\
    0.1875 & \text{ otherwise.}
\end{cases}
\end{equation}

Even though $\max_{a \in \mathcal{A}^{\mathrm{safe}}(s)}\pi(a|s) < \delta^{\star}$ ($0.25 < 0.5$), it holds that $\max_{a \in \mathcal{A}^{\mathrm{safe}}(s)}\pi(a|s) > \max_{a \notin \mathcal{A}^{\mathrm{safe}}(s)} \pi(a | s)$ ($0.25 > 0.1875$). This completes the proof that the constraint $\max_{a \in \mathcal{A}^{\mathrm{safe}}(s)}\pi(a|s) < \delta^{\star}$ is not necessary for safety guarantee.
\end{proof}

\subsubsection{Sufficient condition for unsafety} In order to prove that a policy is \emph{unsafe}, it is sufficient to show that the following safety surrogate inequality holds:
\begin{equation*}
    \tilde{\phi}^{\mathrm{safe}}(s) < \dfrac{1}{1 + K - M},
\end{equation*}
where $K = |\mathcal{A}(s)|$ and $M = |\mathcal{A}^{\mathrm{safe}}(s)|$.
\begin{proof}
    Using the definition of the safety surrogate, we write:
    \begin{equation*}
        \sum_{a \in \mathcal{A}^{\mathrm{safe}}(s)}\pi(a|s) < \dfrac{1}{1 + K - M}
    \end{equation*}

    \begin{equation*}
        \sum_{a \notin \mathcal{A}^{\mathrm{safe}}(s)}\pi(a|s) = 1 - \sum_{a \in \mathcal{A}^{\mathrm{safe}}(s)}\pi(a|s) > 1 - \dfrac{1}{1 + K - M}
    \end{equation*}

    \begin{equation*}
        \sum_{a \notin \mathcal{A}^{\mathrm{safe}}(s)}\pi(a|s) > \dfrac{K-M}{1 + K - M}
    \end{equation*}

    From this it follows that:
    \begin{itemize}
        \item $\max_{a \notin \mathcal{A}^{\mathrm{safe}}(s)}\pi(a|s) > \dfrac{1}{1 + K - M}$ 
        \item $\max_{a \in \mathcal{A}^{\mathrm{safe}}(s)} \pi(a | s) < \dfrac{1}{1 + K - M}$
    \end{itemize}   
    Therefore:
    \begin{equation*}
        \max_{a \notin \mathcal{A}^{\mathrm{safe}}(s)}\pi(a|s) > \max_{a \in \mathcal{A}^{\mathrm{safe}}(s)} \pi(a | s)
    \end{equation*}
\end{proof}

However, this sufficient unsafety condition is not necessary:
\begin{proof}
    To prove that, we will provide a counterexample for $K=3$ and $M=1$. The threshold is then $\frac{1}{2}$.
    \begin{equation}
    \pi(a|s)=
    \begin{cases}
        0.4, & a \in \mathcal{A}^{\mathrm{safe}}(s), \\
        0.3  & \text{ otherwise.}
    \end{cases}
    \end{equation}

Even though $\tilde{\phi}^{\mathrm{safe}}(s) < \frac{1}{1 + K - M}$ ($0.4 < 0.5$), the maximum probability of a safe action is higher than the maximum probability of an unsafe action ($0.4 > 0.3$).
\end{proof}

The discussion above showcases that the relationship between the hard safety and safety surrogate is helpful, but there is a region of "uncertainty". Namely, when $\tilde{\phi}^{\mathrm{safe}}(s) \in \left[\frac{1}{1+K-M}; \frac{M}{1+M} \right]$, we cannot infer the value of $\phi^{\mathrm{safe}}(s)$ from $\tilde{\phi}^{\mathrm{safe}}(s)$ alone. Figure~\ref{fig:hard_safety_vs_surrogate} illustrates the relationship between the hard safety specification and its surrogate.

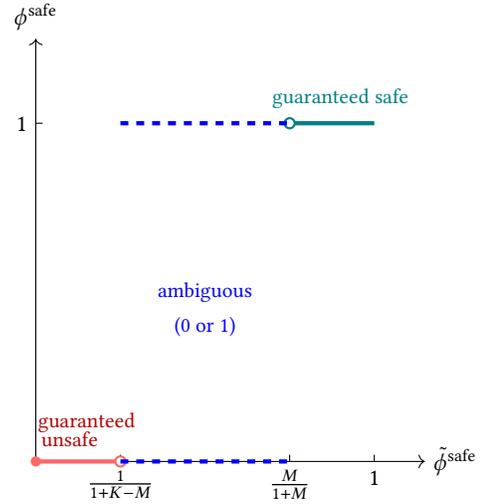
\begin{figure}[t]
    \centering
    \begin{tikzpicture}[scale=4.5]
        \draw[->] (0,0) -- (1.15,0) node[right] {$\tilde{\phi}^{\mathrm{safe}}$};
        \draw[->] (0,0) -- (0,1.25) node[above] {$\phi^{\mathrm{safe}}$};

        \draw (0,1) node[left] {$1$} -- (0.02,1);
        \draw (1,0) node[below] {$1$} -- (1,0.02);
        \draw (0.25,0) node[below] {$\frac{1}{1+K-M}$} -- (0.25,0.02);
        \draw (0.75,0) node[below] {$\frac{M}{1+M}$} -- (0.75,0.02);

        \draw[teal, ultra thick] (0.75,1) -- (1,1);
        \fill[teal] (0.75,1) circle (0.015);
        \draw[teal, fill=white, thick] (0.75,1) circle (0.015);

        \draw[red!60, ultra thick] (0,0) -- (0.25,0);
        \fill[red!60] (0,0) circle (0.015);
        \draw[red!60, fill=white, thick] (0.25,0) circle (0.015);

        \draw[blue, ultra thick, dashed] (0.25,0) -- (0.75,0);
        \draw[blue, ultra thick, dashed] (0.25,1) -- (0.75,1);

        \node[red!70!black, above] at (0.15,0.06) {\small guaranteed};
        \node[red!70!black, above] at (0.1,0.02) {\small unsafe};
        \node[teal, above] at (0.9,1.02) {\small guaranteed safe};
        \node[blue] at (0.5,0.5) {\small ambiguous};
        \node[blue] at (0.5,0.4) {\small (0 or 1)};
    \end{tikzpicture}
    \caption{Relationship between the safety surrogate and the hard safety specification.}
    \label{fig:hard_safety_vs_surrogate}
\end{figure}

\subsubsection{The Surrogate is Not Monotonically Related to the Hard Specification}

An appealing property would be that increasing $\tilde{\phi}^{\mathrm{safe}}$ never decreases $\phisafe$, i.e., informally, $\partial \phisafe / \partial \tilde{\phi}^{\mathrm{safe}} \geq 0$. If this held, $\tilde{\phi}^{\mathrm{safe}}$ would be a suitable optimisation target \emph{without enforcing a lower bound on it}. However, the following counterexample demonstrates that this \emph{monotonicity property does not hold for $\tilde{\phi}^{\mathrm{safe}} \in \left[ \frac{1}{1+K-M}; \frac{M}{1+M} \right]$}.

\begin{proposition}[$\phisafe$ is not globally monotonic in  $\tilde{\phi}^{\mathrm{safe}}$]
Consider $K=3$ actions with $\Asafe(s) = \{a_1, a_2\}$ (two safe actions, one unsafe action $a_3$). For this case, the sound bound for the safety surrogate is $\delta^{\star}=\frac{2}{3}$. To provide a non-monotonicity example, we should consider action distributions such that $\tilde{\phi}^{\mathrm{safe}} \in \left[ \frac{1}{1+K-M}; \frac{M}{1+M} \right]$.
\end{proposition}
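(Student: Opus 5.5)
The statement is existential: it asserts that, for $K=3$ with $\Asafe(s)=\{a_1,a_2\}$, the map from $\tilde{\phi}^{\mathrm{safe}}$ to $\phisafe$ is not monotone on the ambiguous interval $\left[\frac{1}{2}, \frac{2}{3}\right]$. Here $\frac{1}{1+K-M} = \frac{1}{2}$ and $\frac{M}{1+M} = \frac{2}{3}$. I will prove it by exhibiting two explicit softmax distributions $\pi^{(1)}$ and $\pi^{(2)}$ over $\{a_1,a_2,a_3\}$ satisfying
\[
\tilde{\phi}^{\mathrm{safe}}(\pi^{(1)}) < \tilde{\phi}^{\mathrm{safe}}(\pi^{(2)}),
\]
both values in $\left[\frac12,\frac23\right]$, yet with $\phisafe(\pi^{(1)})=1$ and $\phisafe(\pi^{(2)})=0$. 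Every strictly positive distribution is realisable as a softmax of logits, for example $z_a = \tau\log\pi(a\mid s)$. So it suffices to specify probability vectors with no ties at the maximum.

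The construction follows the mechanism behind the ambiguity. A safe greedy action needs only one safe action to dominate. A large safe mass, by contrast, can be split evenly between $a_1$ and $a_2$ so that neither beats $a_3$. For the safe-but-low-surrogate point I take
\[
\pi^{(1)} = (0.5,\ 0.01,\ 0.49), \qquad \tilde{\phi}^{\mathrm{safe}} = 0.51,
\]
whose argmax is $a_1 \in \Asafe(s)$, so $\phisafe=1$. For the unsafe-but-higher-surrogate point I take
\[
\pi^{(2)} = (0.3,\ 0.3,\ 0.4), \qquad \tilde{\phi}^{\mathrm{safe}} = 0.6,
\]
whose argmax is $a_3$, so $\phisafe=0$. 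Both surrogate values lie in $\left[\frac12,\frac23\right]$, and both have a strict maximum.

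I will then check the arithmetic explicitly: the sums, the membership in the interval, and the uniqueness of the argmax. I will also connect the example to the appendix results, which confirm that it sits exactly in the ambiguous band. It lies below the sound threshold of Proposition~\ref{proposition:surrogate_properties}(3) and above the sufficient-unsafety threshold $\frac{1}{1+K-M}$. Hence the example contradicts neither result.

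Optionally, I will strengthen the claim to a continuous path: a logit trajectory along which the surrogate increases strictly while $\phisafe$ drops from $1$ to $0$. The linear interpolation $\pi_t = (1-t)\pi^{(1)} + t\pi^{(2)}$ has surrogate $0.51 + 0.09t$, which is strictly increasing, and its argmax switches from $a_1$ to $a_3$ at some $t$ in $(0,1)$. This would defeat even an informal derivative version, $\partial\phisafe/\partial\tilde{\phi}^{\mathrm{safe}} \ge 0$.

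There is no real obstacle. The only care needed is matching "monotonicity" to the statement's informal notion, namely that a larger surrogate value never gives a smaller $\phisafe$, and making sure the chosen points avoid argmax ties, in keeping with the no-ties convention of Definition~\ref{def:hard_safety_specification}.
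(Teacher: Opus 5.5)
Your proposal is correct and follows essentially the paper's approach: exhibit explicit probability vectors inside the ambiguous band $[\frac12,\frac23]$ where the surrogate increases while the greedy action flips from safe to unsafe. The paper uses $(0.45,0.15,0.40)\to(0.38,0.23,0.39)$ and you use $(0.5,0.01,0.49)\to(0.3,0.3,0.4)$; your arithmetic checks out. The paper also gives the reverse flip, where $\phi^{\mathrm{safe}}$ goes from $0$ to $1$ as $\tilde{\phi}^{\mathrm{safe}}$ increases, which rules out monotone non-increasing behaviour as well. If ``not monotonic'' is read in that two-sided sense, add one such pair, e.g.\ from your $\pi^{(2)}$ to $(0.45,0.2,0.35)$ with surrogate $0.65$.
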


\begin{proof} We will show that increasing the surrogate value can both decrease and increase the value of hard safety specification.
\paragraph{Increasing surrogate can decrease the hard safety.\\}
\textbf{Policy A:} $\pi_A = (0.45,\; 0.15,\; 0.40)$.
\begin{itemize}
    \item $\tilde{\phi}^{\mathrm{safe}} = 0.45 + 0.15 = 0.60$. Note: $\tilde{\phi}^{\mathrm{safe}} < \frac{2}{3}$. 
    \item $\argmax_a \pi_A(a \mid s) = a_1$ (safe), so $\phisafe = 1$.
\end{itemize}

\textbf{Policy B:} $\pi_B = (0.38,\; 0.23,\; 0.39)$.
\begin{itemize}
    \item $\tilde{\phi}^{\mathrm{safe}} = 0.38 + 0.23 = 0.61 > 0.60$. Note: $\tilde{\phi}^{\mathrm{safe}} < \frac{2}{3}$. 
    \item $\argmax_a \pi_B(a \mid s) = a_3$ (unsafe), so $\phisafe = 0$.
\end{itemize}
Therefore, the surrogate increased ($0.60 \to 0.61$) while the hard specification decreased ($1 \to 0$).

\paragraph{Increasing surrogate can increase the hard safety.\\}
\textbf{Policy B:} $\pi_B = (0.38,\; 0.23,\; 0.39)$.
\begin{itemize}
    \item $\tilde{\phi}^{\mathrm{safe}} = 0.38 + 0.23 = 0.61$. Note: $\tilde{\phi}^{\mathrm{safe}} < \frac{2}{3}$. 
    \item $\argmax_a \pi_B(a \mid s) = a_3$ (unsafe), so $\phisafe = 0$.
\end{itemize}

\textbf{Policy C:} $\pi_B = (0.39,\; 0.23,\; 0.38)$.
\begin{itemize}
    \item $\tilde{\phi}^{\mathrm{safe}} = 0.39 + 0.23 = 0.62 > 0.61$. Note: $\tilde{\phi}^{\mathrm{safe}} < \frac{2}{3}$. 
    \item $\argmax_a \pi_C(a \mid s) = a_1$ (safe), so $\phisafe = 1$.
\end{itemize}
Therefore, the surrogate increased ($0.61 \to 0.62$) and the hard specification increased ($0 \to 1$).
\end{proof}

\begin{remark}[Mechanism of non-monotonicity]
The non-monotonic relationship arises because the surrogate aggregates probability mass over \emph{all} safe actions, but the hard specification depends on which \emph{single} action has the highest probability. Redistributing mass among safe actions (e.g. from $a_1$ to $a_2$) can increase the surrogate while simultaneously eroding the lead of the top safe action, allowing an unsafe action to become the argmax.
\end{remark}

\begin{remark}[Implications for optimisation]
This shows that gradient-based optimisation of $\tilde{\phi}^{\mathrm{safe}}$ can, in principle, move the policy \emph{away} from safety when there is no lower bound imposed on $\tilde{\phi}^{\mathrm{safe}}$. This is exactly what we do in the optimisation -- by requiring $\tilde{\phi}^{\mathrm{safe}}(s) > \frac{M}{1+M} \; \forall s \in \mathcal{S}_{\mathrm{sc}}$, we ensure that $\phisafe(s) = 1 \; \forall s \in \mathcal{S}_{\mathrm{sc}}$.
\end{remark}

\subsection{Proofs of corollaries}
\paragraph{Corollary~\ref{cor:per_state_certified_safety} restated (per-state certified safety):} 
Under the assumptions of Theorem~\ref{thm:main_safeadapt}, for every
\(\theta' \in \Theta_{\mathrm{source}}^{\mathrm{safe}}\) and every \(s \in \mathcal{S}_{\mathrm{sc}}\),
\[
\arg\max_{a \in \mathcal{A}(s)} \pi_{\theta'}(a \mid s) \in \mathcal{A}^{\mathrm{safe}}(s).
\]

\begin{proof}
By Theorem~\ref{thm:main_safeadapt}, for every \(\theta' \in \Theta_{\mathrm{source}}^{\mathrm{safe}}\),
\[
\Phi^{\mathrm{safe}}_{\mathrm{sc}}(\pi_{\theta'}) = 1.
\]
By Definition~\ref{def:critical_state_safety_rate},
\[
\Phi^{\mathrm{safe}}_{\mathrm{sc}}(\pi_{\theta'})
=
\frac{1}{|\mathcal{S}_{\mathrm{sc}}|}
\sum_{s \in \mathcal{S}_{\mathrm{sc}}} \phi^{\mathrm{safe}}(s;\pi_{\theta'}).
\]
Since each term \(\phi^{\mathrm{safe}}(s;\pi_{\theta'})\) is binary, the average can equal \(1\) only if
\[
\phi^{\mathrm{safe}}(s;\pi_{\theta'}) = 1
\qquad \forall s \in \mathcal{S}_{\mathrm{sc}}.
\]
By Definition~\ref{def:hard_safety_specification} of the hard safety specification, \(\phi^{\mathrm{safe}}(s;\pi_{\theta'}) = 1\) is equivalent to
\[
\arg\max_{a \in \mathcal{A}(s)} \pi_{\theta'}(a \mid s) \in \mathcal{A}^{\mathrm{safe}}(s).
\]
This proves the claim.
\end{proof}

\paragraph{Corollary~\ref{cor:safety_preservation_throughout_adaptation} (safety preservation throughout adaptation) restated:} 
Let \(\{\theta_t\}_{t \ge 0}\) be the sequence of policy parameters produced by Phase~3.
Then \(\theta_t \in \Theta_{\mathrm{source}}^{\mathrm{safe}}\) for all \(t\), and therefore
\[
\Phi^{\mathrm{safe}}_{\mathrm{sc}}(\pi_{\theta_t}) = 1
\qquad \forall t \ge 0.
\]

\begin{proof}
Phase~3 performs projected gradient descent with projection onto the certified orthotope
\(\Theta_{\mathrm{source}}^{\mathrm{safe}}\). Therefore, after each gradient step, the updated parameter vector
is projected back into \(\Theta_{\mathrm{source}}^{\mathrm{safe}}\), so
\[
\theta_t \in \Theta_{\mathrm{source}}^{\mathrm{safe}}
\qquad \forall t \ge 0.
\]
Applying Theorem~\ref{thm:main_safeadapt} to each iterate \(\theta_t\) yields
\[
\Phi^{\mathrm{safe}}_{\mathrm{sc}}(\pi_{\theta_t}) = 1
\qquad \forall t \ge 0.
\]
\end{proof}

\paragraph{Corollary~\ref{cor:distribution_independent_source_task_safety} (distribution-independent source-task safety) restated:}
Under the assumptions of Theorem~\ref{thm:main_safeadapt}, greedy execution of
\(\pi_{\theta_t}\) on the source task never selects an unsafe action at any iteration \(t\).
Consequently, for any initial state distribution \(\mu_0\), the source-task occupancy measure
of \(\pi_{\theta_t}\) satisfies
\[
\mathbb{E}_{(s,a)\sim d^{\pi_{\theta_t}}}\!\left[U_{\mathrm{source}}(s,a)\right] = 0.
\]

\begin{proof}
Fix any iteration \(t \ge 0\). By Corollary~\ref{cor:per_state_certified_safety}, for every
safety-critical state \(s \in \mathcal{S}_{\mathrm{sc}}\),
\[
\arg\max_{a \in \mathcal{A}(s)} \pi_{\theta_t}(a \mid s) \in \mathcal{A}^{\mathrm{safe}}(s).
\]
By Definition~\ref{def:safe_action_set}, every action \(a \in \mathcal{A}^{\mathrm{safe}}(s)\) satisfies
\[
U_{\mathrm{source}}(s,a) = 0.
\]
Now consider any state \(s \notin \mathcal{S}_{\mathrm{sc}}\). By Definition~3.2, such a state
has no unsafe actions, so
\[
U_{\mathrm{source}}(s,a) = 0
\qquad \forall a \in \mathcal{A}(s).
\]
Therefore, under greedy execution, every action selected by \(\pi_{\theta_t}\) satisfies
\(U_{\mathrm{source}}(s,a)=0\), regardless of which source-task states are actually visited.

Hence \(U_{\mathrm{source}}(s,a)=0\) for all state-action pairs \((s,a)\) in the support of the
source-task occupancy measure \(d^{\pi_{\theta_t}}\). Since \(U_{\mathrm{source}}(s,a)\in\{0,1\}\),
it follows that
\[
\mathbb{E}_{(s,a)\sim d^{\pi_{\theta_t}}}\!\left[U_{\mathrm{source}}(s,a)\right] = 0.
\]
\end{proof}

\FloatBarrier
\section{Environment configurations}
Table~\ref{tab:frozenlake_poisonedapple_comparison} shows how environments are configured and Figure~\ref{fig:envs_init_frames} demonstrates initial-state frames for every environment configuration.
\begin{table}[H]
\centering
\small
\caption{Frozen Lake vs.\ Poisoned Apple experiment setup.}
\label{tab:frozenlake_poisonedapple_comparison}
\begin{tabular}{p{0.2\linewidth}p{0.32\linewidth}p{0.32\linewidth}}
\toprule
Setting & Frozen Lake & Poisoned Apple \\
\midrule
Environment
& \texttt{FrozenLake-v1} with custom wrappers
& \texttt{PoisonedAppleEnv} (custom Gymnasium environment) \\

State representation
& One-hot-encoded position over grid cells and appended task ID
& Flat grid vector with the following entries: \texttt{0=empty, 1=agent, 2=safe apple, 3=poisoned apple} \\

Action space
& \texttt{Discrete(4)}: Left, Down, Right, Up
& \texttt{Discrete(4)}: Left, Down, Right, Up \\

Unsafe event
& Entering a hole tile (\texttt{H})
& Stepping onto a poisoned apple (\texttt{info["safe"]=False}, \texttt{cost=1}) \\

Termination on unsafe event?
& Yes
& No \\

What changes from Task 1 to Task 2?
& Positions of ice holes
& Safe/poisoned apple positions \\

\bottomrule
\end{tabular}
\end{table}

\begin{figure}[H]
    \centering
    \begin{subfigure}{0.45\textwidth}
        \centering
        \includegraphics[width=\textwidth]{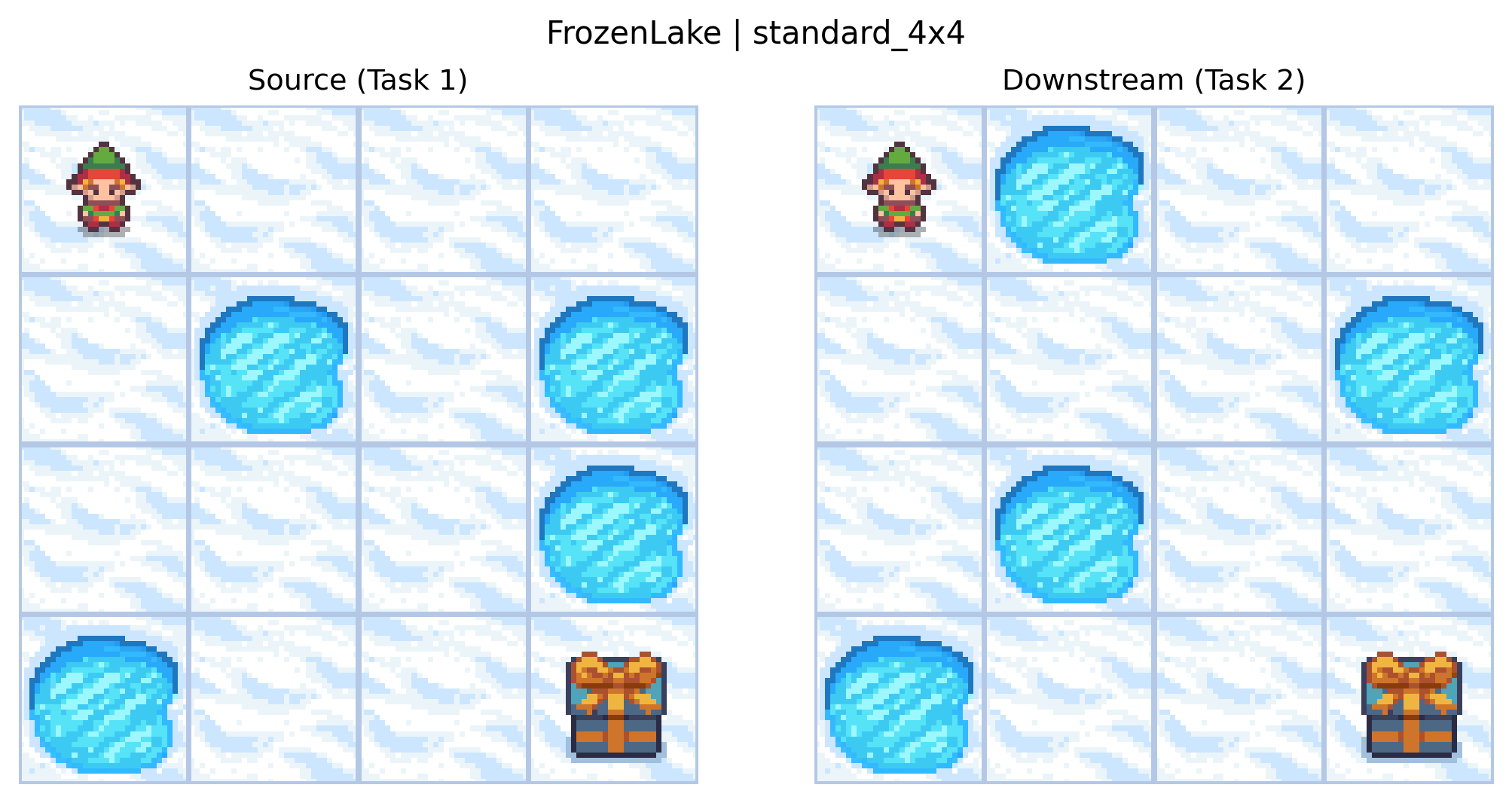}
    \end{subfigure}

    \vspace{0.5em}

    \begin{subfigure}{0.45\textwidth}
        \centering
        \includegraphics[width=\textwidth]{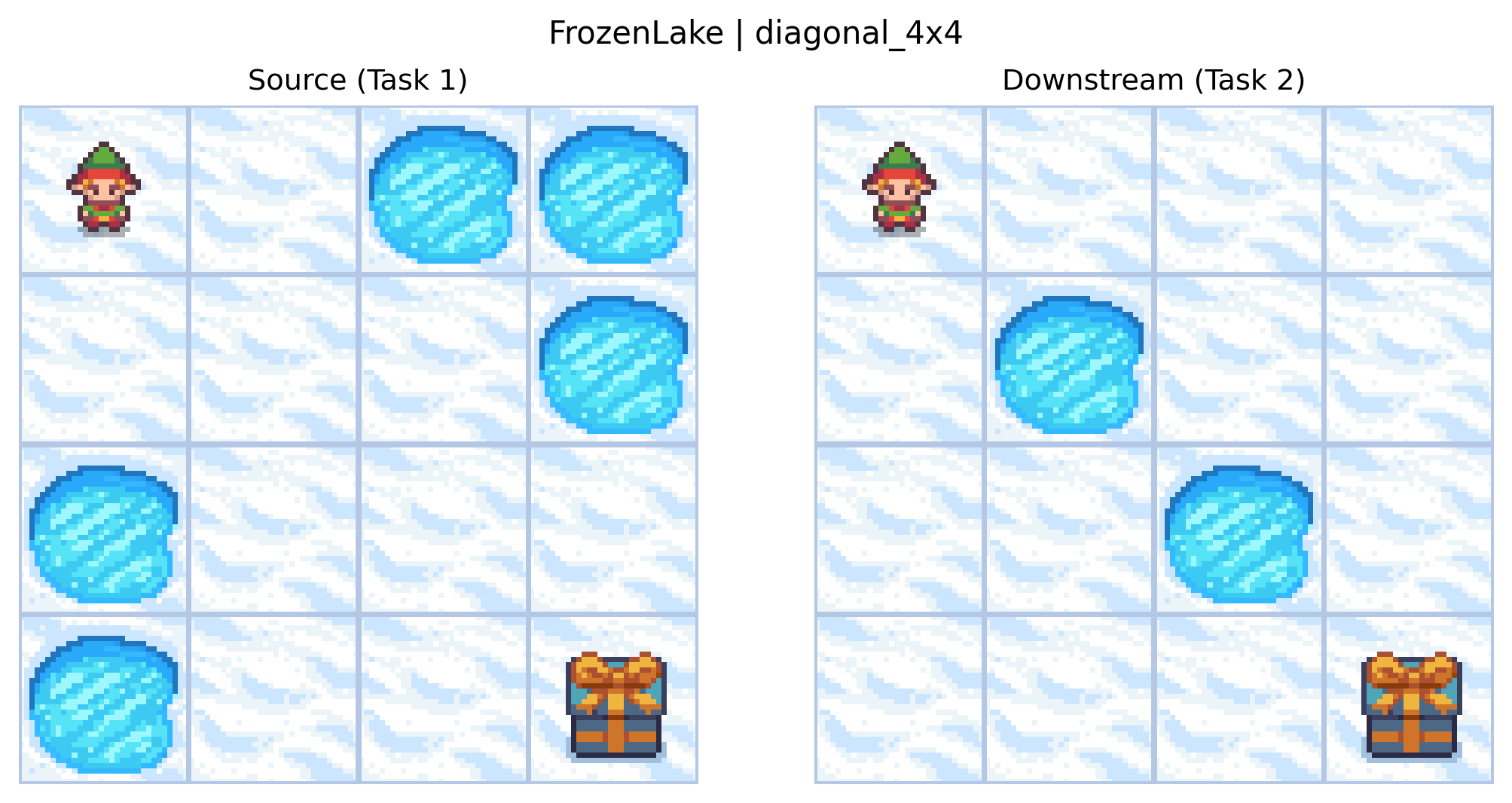}
    \end{subfigure}
    
    \vspace{0.5em}

    \begin{subfigure}{0.45\textwidth}
        \centering
        \includegraphics[width=\textwidth]{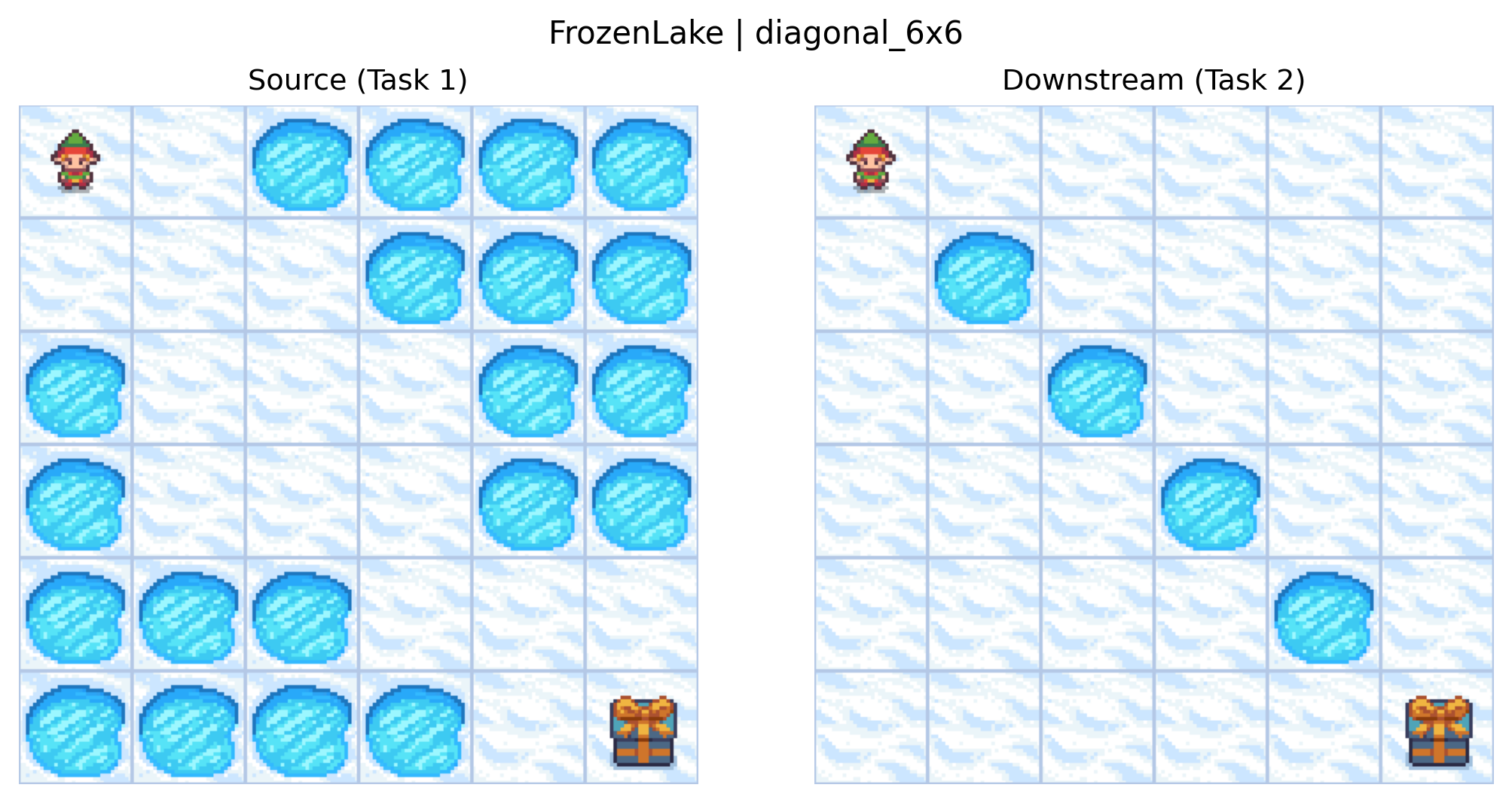}
    \end{subfigure}

    \vspace{0.5em}

    \begin{subfigure}{0.45\textwidth}
        \centering
        \includegraphics[width=\textwidth]{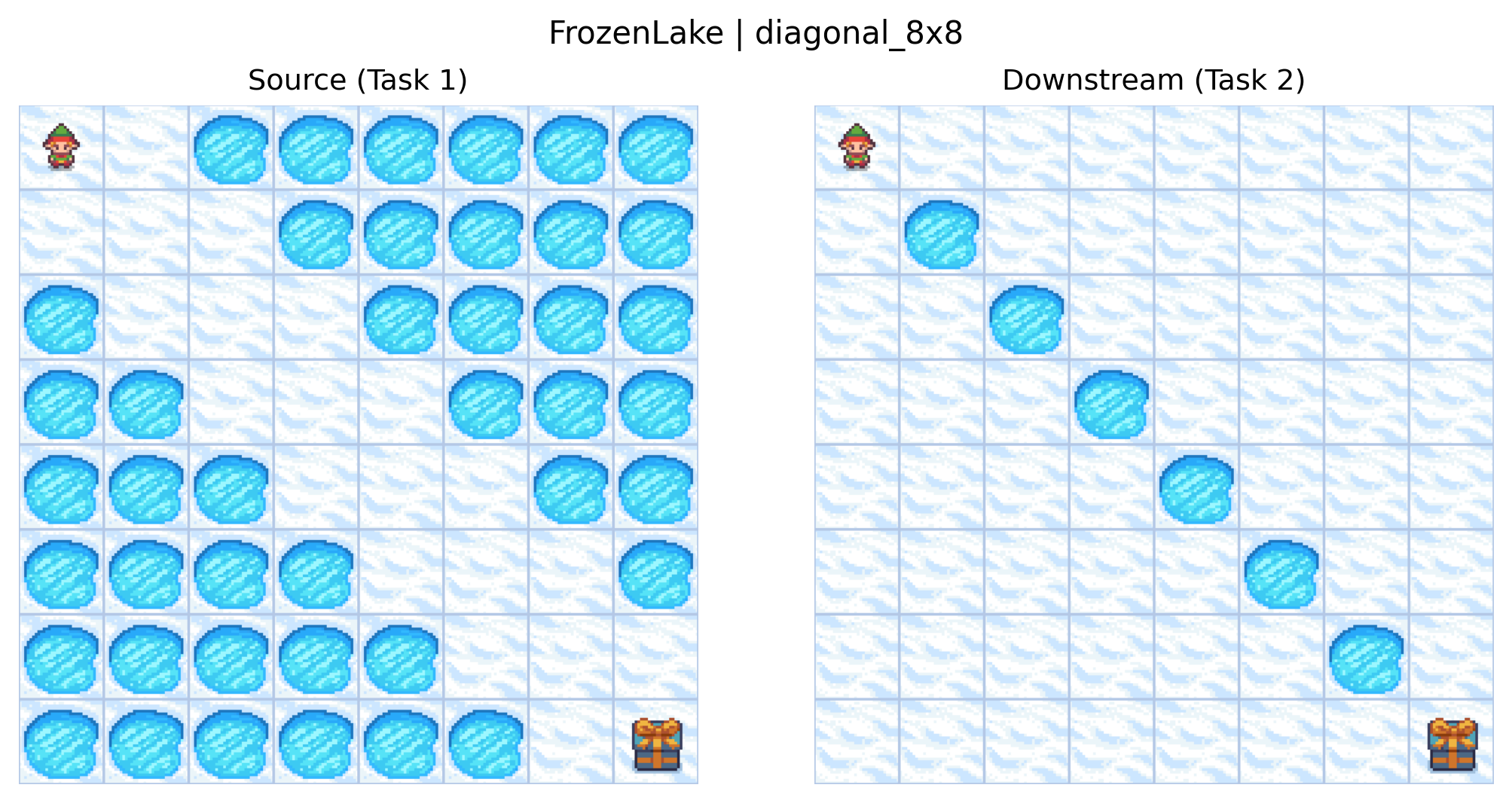}
    \end{subfigure}

    \begin{subfigure}{0.45\textwidth}
        \centering
        \includegraphics[width=\textwidth]{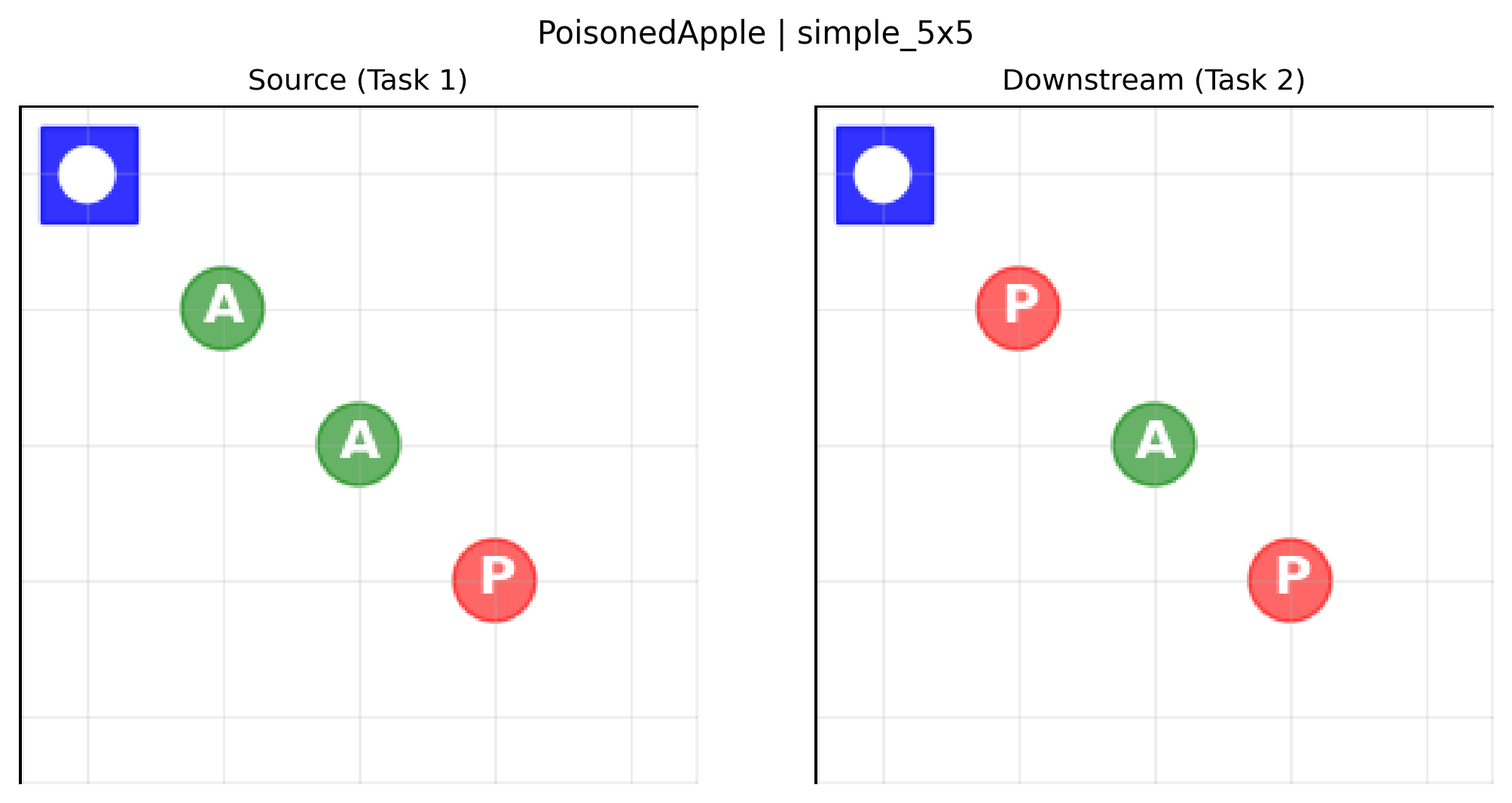}
    \end{subfigure}
    
    \caption{Environment frames at initial states.}
    \label{fig:envs_init_frames}
\end{figure}
\FloatBarrier

\section{Training, Adaptation, and Certification Settings}
\label{app:training_adaptation_certification}
We report the full experimental configuration used in both environments (Frozen Lake and Poisoned Apple), organized by training stage. Table~\ref{tab:stage1_source} summarises source-task policy training (PPO and safety finetuning), Table~\ref{tab:stage4_rashomon} reports Rashomon-set computation settings, Table~\ref{tab:stage2_downstream_ppo} details downstream PPO adaptation settings for UnsafeAdapt and \textsc{SafeAdapt}, and Table~\ref{tab:stage3_ewc} lists EWC-PPO-specific hyperparameters. Together, these tables make explicit which choices are shared across environments and which are environment-specific.

\begin{table}[ht]
\centering
\scriptsize
\setlength{\tabcolsep}{3pt}
\begin{tabularx}{\columnwidth}{>{\raggedright\arraybackslash}p{0.31\columnwidth} >{\raggedright\arraybackslash}X >{\raggedright\arraybackslash}X}
\toprule
Setting & Frozen Lake & Poisoned Apple \\
\midrule
Source PPO max timesteps & 500{,}000 & 20{,}000 \\
PPO eval episodes & 1 & 1 \\
Actor/Critic architecture & MLP 64-64 & MLP 256-256 \\
Rollout steps & 256 & 256 \\
Update epochs & 8 & 6 \\
Mini-batch size & 64 & 64 \\
$\gamma$ / GAE $\lambda$ & 0.99 / 0.95 & 0.99 / 0.95 \\
Clip / value coef & 0.2 / 0.5 & 0.2 / 0.5 \\
Entropy coef & 0.01 & 0.01 \\
Learning rate / max grad norm & $3\times 10^{-4}$ / 0.5 & $3\times 10^{-4}$ / 0.5 \\
Source PPO early stopping & Enabled; deterministic reward $\ge 1.0$ (1 eval ep) & Disabled (\texttt{early\_stop=False}) \\
Safety finetuning & Enabled (default) & Enabled (default) \\
Safety finetuning objective & Allowed-action log-prob on combined safety+trajectory states (\texttt{overlap\_mode=policy}) & Multi-label BC (\texttt{BCEWithLogitsLoss}) on safety-critical states \\
Target safety rate & 1.0 & 1.0 \\
Safety finetuning optimiser & Adam, lr $=10^{-2}$, max epochs $=3000$ & Adam, lr $=2\times 10^{-3}$, epochs $=2000$, batch $=64$ \\
Extra source acceptance check & Deterministic Task-1 reward must be 1.0 & Task-1 overall success $\ge 0.95$ + global safety check \\
\bottomrule
\end{tabularx}
\caption{Source-task policy training and safety finetuning settings.}
\label{tab:stage1_source}
\end{table}

\begin{table}[ht]
\centering
\scriptsize
\setlength{\tabcolsep}{3pt}
\begin{tabularx}{\columnwidth}{>{\raggedright\arraybackslash}p{0.31\columnwidth} >{\raggedright\arraybackslash}X >{\raggedright\arraybackslash}X}
\toprule
Setting & Frozen Lake & Poisoned Apple \\
\midrule
Rashomon dataset & Task-1 safety-critical states & Task-1 safety dataset \\
Label type / aggregation & Multi-label safe-action masks; aggregation=\texttt{min} & Multi-label safe-action masks; aggregation=\texttt{min} \\
Rashomon iterations (\texttt{n\_iters}) & 5{,}000 & 20{,}000 \\
Surrogate threshold & $\max_s |\mathcal{A}^{\mathrm{safe}}(s)| / (1+\max_s |\mathcal{A}^{\mathrm{safe}}(s)|)$ & Same formula \\
Inverse-temperature search & Smallest $T\in[10,1000]$ satisfying surrogate mass constraint & Smallest $T\in[10,1000]$ satisfying surrogate mass constraint \\
s\texttt{min\_acc\_limit} & Surrogate threshold & Surrogate threshold \\
\texttt{min\_acc\_increment} & 0.0 & 0.0 \\
\texttt{checkpoint} & 100 & 100 \\
Hard certificate threshold & 1.0 & \texttt{min\_safety\_accuracy} (default 1.0 here) \\
Selected bound & Last certificate index meeting hard threshold & Last certificate index meeting hard threshold \\
Used downstream as & Actor parameter bounds (\texttt{param\_l}, \texttt{param\_u}) for SafeAdapt PPO & Actor parameter bounds (\texttt{param\_l}, \texttt{param\_u}) for SafeAdapt PPO \\
\bottomrule
\end{tabularx}
\caption{Rashomon set computation settings.}
\label{tab:stage4_rashomon}
\end{table}

\begin{table}[ht]
\centering
\scriptsize
\setlength{\tabcolsep}{3pt}
\begin{tabularx}{\columnwidth}{>{\raggedright\arraybackslash}p{0.31\columnwidth} >{\raggedright\arraybackslash}X >{\raggedright\arraybackslash}X}
\toprule
Setting & Frozen Lake & Poisoned Apple \\
\midrule
Downstream max timesteps & 50{,}000 & 20{,}000 \\
Entropy coef & 0.1 & 0.01 \\
Learning rate & $3\times 10^{-4}$ (PPO default) & $3\times 10^{-4}$ (PPO default) \\
Rollout / epochs / minibatch & 2048 / 10 / 64 & 2048 / 10 / 64 \\
$\gamma$ / GAE $\lambda$ / clip / vf / grad norm & 0.99 / 0.95 / 0.2 / 0.5 / 0.5 & 0.99 / 0.95 / 0.2 / 0.5 / 0.5 \\
Eval episodes & 1 & 1 \\
Early stopping enabled & Yes & Yes \\
Early-stop reward threshold & 1.0 & 0.96 \\
Early-stop min steps & 0 & 0 \\
Early-stop check cadence & Every $20{,}480$ steps & Every $20{,}480$ steps \\
\bottomrule
\end{tabularx}
\caption{Downstream adaptation PPO settings for UnsafeAdapt and \textsc{SafeAdapt}.}
\label{tab:stage2_downstream_ppo}
\end{table}

\begin{table}[ht]
\centering
\scriptsize
\setlength{\tabcolsep}{3pt}
\begin{tabularx}{\columnwidth}{>{\raggedright\arraybackslash}p{0.31\columnwidth} >{\raggedright\arraybackslash}X >{\raggedright\arraybackslash}X}
\toprule
Setting & Frozen Lake & Poisoned Apple \\
\midrule
EWC $\lambda$ & 5000 & 5000 \\
Fisher data source & Source training states & Source training states \\
Fisher sample-size cap & $\min(1000, N_{\text{source states}})$ & $\min(1000, N_{\text{source states}})$ \\
Compute critic Fisher? & No & No \\
Apply EWC to critic during training? & No (default) & No (explicit) \\
EWC adaptation timesteps & 50{,}000 & 20{,}000 \\
Entropy coef in EWC-PPO & 0.1 & 0.01 \\
PPO backbone in EWC & lr $=3\times 10^{-4}$, rollout 2048, epochs 10, minibatch 64 & lr $=3\times 10^{-4}$, rollout 2048, epochs 10, minibatch 64 \\
Early-stop reward threshold & 1.0 & 0.96 \\
Early-stop eval episodes (internal) & 10 & 10 \\
\bottomrule
\end{tabularx}
\caption{EWC-PPO settings.}
\label{tab:stage3_ewc}
\end{table}

\FloatBarrier
\section{Detailed experiment results}\label{app:detailed_results}

\subsection{Scalability analysis}
Tables~\ref{tab:scalability_analysis_frozenlake_task1} and
\ref{tab:scalability_analysis_frozenlake_diagonal_task_2} report results of experiments with the diagonal Frozen Lake configurations to highlight how retention and adaptation scale with layout size.
\begin{table*}[!t]
\centering
\setlength{\tabcolsep}{3pt}
\caption{
Scalability analysis: Frozen Lake Task~1 results across configurations (mean $\pm$ std over 10 seeds).
}
\label{tab:scalability_analysis_frozenlake_task1}
\begin{tabularx}{\textwidth}{
llc 
>{\centering\arraybackslash}X 
>{\centering\arraybackslash}X 
>{\centering\arraybackslash}X
}
\toprule
Environment & Policy & Provably Safe? & $\Phi_{\mathrm{sc}}(\pi)$ & $\Phi_{\mathrm{traj.}}(\pi)$ & Total Reward \\
\midrule

\multirow{4}{*}{\texttt{diagonal\_4x4}} & Source & \checkmark & $\mathbf{1.00 \pm 0.00}$ & $\mathbf{1.00 \pm 0.00}$ & $\mathbf{1.00 \pm 0.00}$ \\
 & UnsafeAdapt & $\times$ & $0.53 \pm 0.07$ & $0.00 \pm 0.00$ & $0.00 \pm 0.00$ \\
 & EWC & $\times$ & $0.67 \pm 0.20$ & $0.30 \pm 0.46$ & $0.30 \pm 0.46$ \\
 & \textsc{SafeAdapt} (ours) & \checkmark & $\mathbf{1.00 \pm 0.00}$ & $\mathbf{1.00 \pm 0.00}$ & $\mathbf{1.00 \pm 0.00}$ \\
\midrule

\multirow{4}{*}{\texttt{diagonal\_6x6}} & Source & \checkmark & $\mathbf{1.00 \pm 0.00}$ & $\mathbf{1.00 \pm 0.00}$ & $\mathbf{1.00 \pm 0.00}$ \\
& UnsafeAdapt & $\times$ & $0.52 \pm 0.11$ & $0.00 \pm 0.00$ & $0.00 \pm 0.00$ \\
& EWC & $\times$ & $0.74 \pm 0.15$ & $0.10 \pm 0.30$ & $0.00 \pm 0.00$ \\
& \textsc{SafeAdapt} (ours) & \checkmark & $\mathbf{1.00 \pm 0.00}$ & $\mathbf{1.00 \pm 0.00}$ & $0.80 \pm 0.40$ \\
\midrule

\multirow{4}{*}{\texttt{diagonal\_8x8}} & Source & \checkmark & 
$\mathbf{1.00 \pm 0.00}$ & $\mathbf{1.00 \pm 0.00}$ & $\mathbf{1.00 \pm 0.00}$ \\
 & UnsafeAdapt & $\times$ & $0.49 \pm 0.13$ & $0.00 \pm 0.00$ & $0.00 \pm 0.00$ \\
 & EWC & $\times$ & $0.79 \pm 0.10$ & $0.12 \pm 0.33$ & $0.00 \pm 0.00$ \\
 & \textsc{SafeAdapt} (ours) & \checkmark & $\mathbf{1.00 \pm 0.00}$ & $\mathbf{1.00 \pm 0.00}$ & $0.88 \pm 0.33$ \\
 
\bottomrule
\end{tabularx}
\end{table*}

\begin{table*}[!t]
\centering
\caption{Scalability analysis: Frozen Lake Task 2 results across configurations (mean $\pm$ std over 10 seeds).}
\label{tab:scalability_analysis_frozenlake_diagonal_task_2}
\begin{tabularx}{\textwidth}{
l l 
>{\centering\arraybackslash}X 
>{\centering\arraybackslash}X
}
\toprule
Environment & Policy & Total Reward & Success Rate \\
\midrule

\multirow{4}{*}{\texttt{diagonal\_4x4}} & Source & $0.00 \pm 0.00$ & $0.00 \pm 0.00$ \\
 & UnsafeAdapt & $\mathbf{1.00 \pm 0.00}$ & $\mathbf{1.00 \pm 0.00}$ \\
 & EWC & $\mathbf{1.00 \pm 0.00}$ & $\mathbf{1.00 \pm 0.00}$ \\
 & \textsc{SafeAdapt} (ours) & $\mathbf{1.00 \pm 0.00}$ & $\mathbf{1.00 \pm 0.00}$ \\
\midrule

\multirow{4}{*}{\texttt{diagonal\_6x6}} & Source & $0.00 \pm 0.00$ & $0.00 \pm 0.00$ \\
& UnsafeAdapt & $\mathbf{1.00 \pm 0.00}$ & $\mathbf{1.00 \pm 0.00}$ \\
& EWC & $\mathbf{1.00 \pm 0.00}$ & $\mathbf{1.00 \pm 0.00}$ \\
& \textsc{SafeAdapt} (ours) & $\mathbf{1.00 \pm 0.00}$ & $\mathbf{1.00 \pm 0.00}$ \\
\midrule

\multirow{4}{*}{\texttt{diagonal\_8x8}} & Source & $0.00 \pm 0.00$ & $0.00 \pm 0.00$ \\
 & UnsafeAdapt & $\mathbf{1.00 \pm 0.00}$ & $\mathbf{1.00 \pm 0.00}$ \\
 & EWC & $\mathbf{1.00 \pm 0.00}$ & $\mathbf{1.00 \pm 0.00}$ \\
 & \textsc{SafeAdapt} (ours) & $0.88 \pm 0.33$ & $0.88 \pm 0.33$ \\

\bottomrule
\end{tabularx}
\end{table*}

\subsection{Rashomon set visualisations}\label{appendix:Rashomon_set_vis}
Figures~\ref{fig:logit_bounds_frozenlake_standard_4x4} and
\ref{fig:worst_case_logit_probs_frozenlake_standard_4x4} visualise the safe logit intervals and worst-case action probabilities for Frozen Lake (\texttt{standard\_4x4}).

\begin{figure}[ht]
    \centering
    \includegraphics[width=1\linewidth]{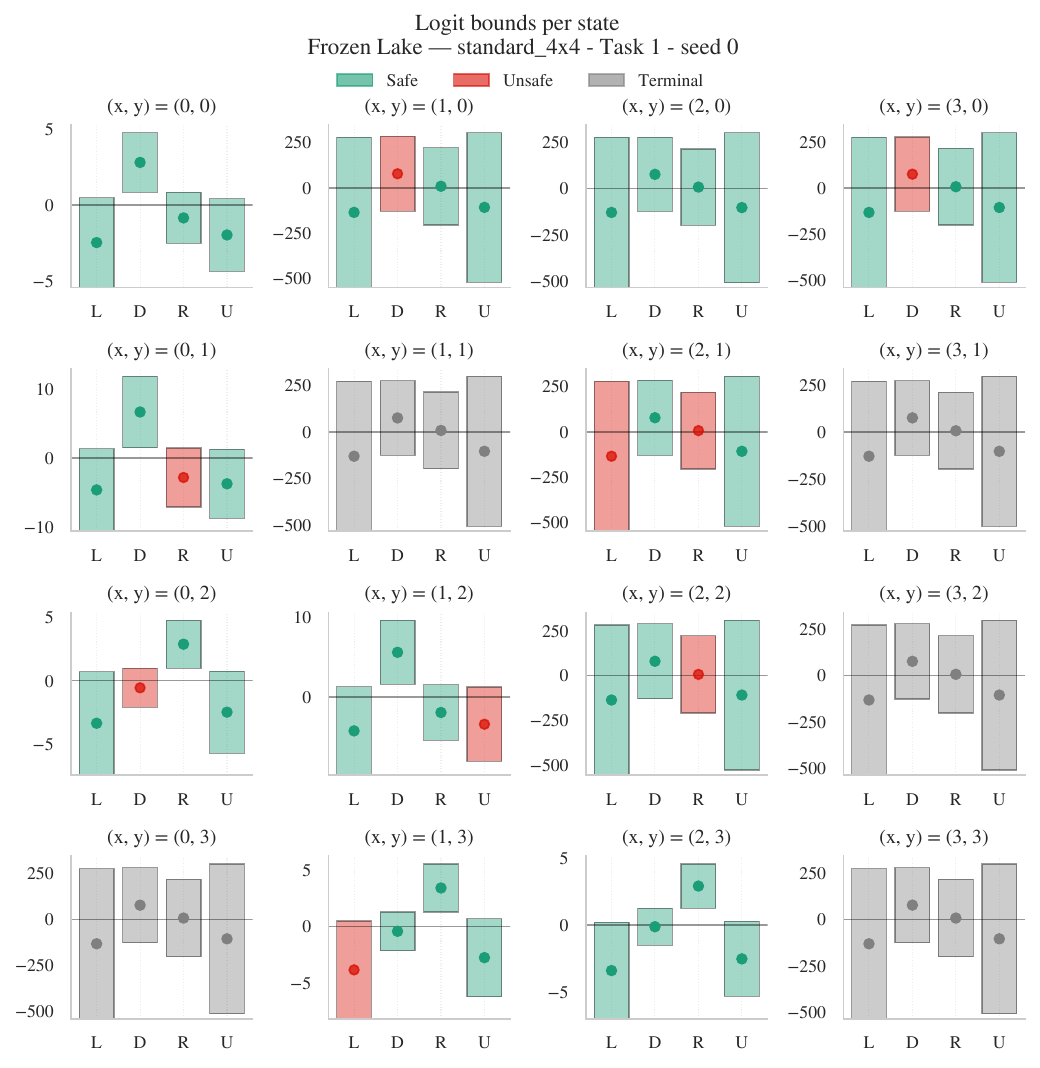}
    \caption{Logit bounds illustrate the following guarantee of the Rashomon set: in any state, there is a safe action whose lower bound logit is greater than upper bound of any unsafe action's logit. States are represented using their x and y coordinates in the grid world.}
    \label{fig:logit_bounds_frozenlake_standard_4x4}
\end{figure}

\begin{figure}[ht]
    \centering
    \includegraphics[width=1\linewidth]{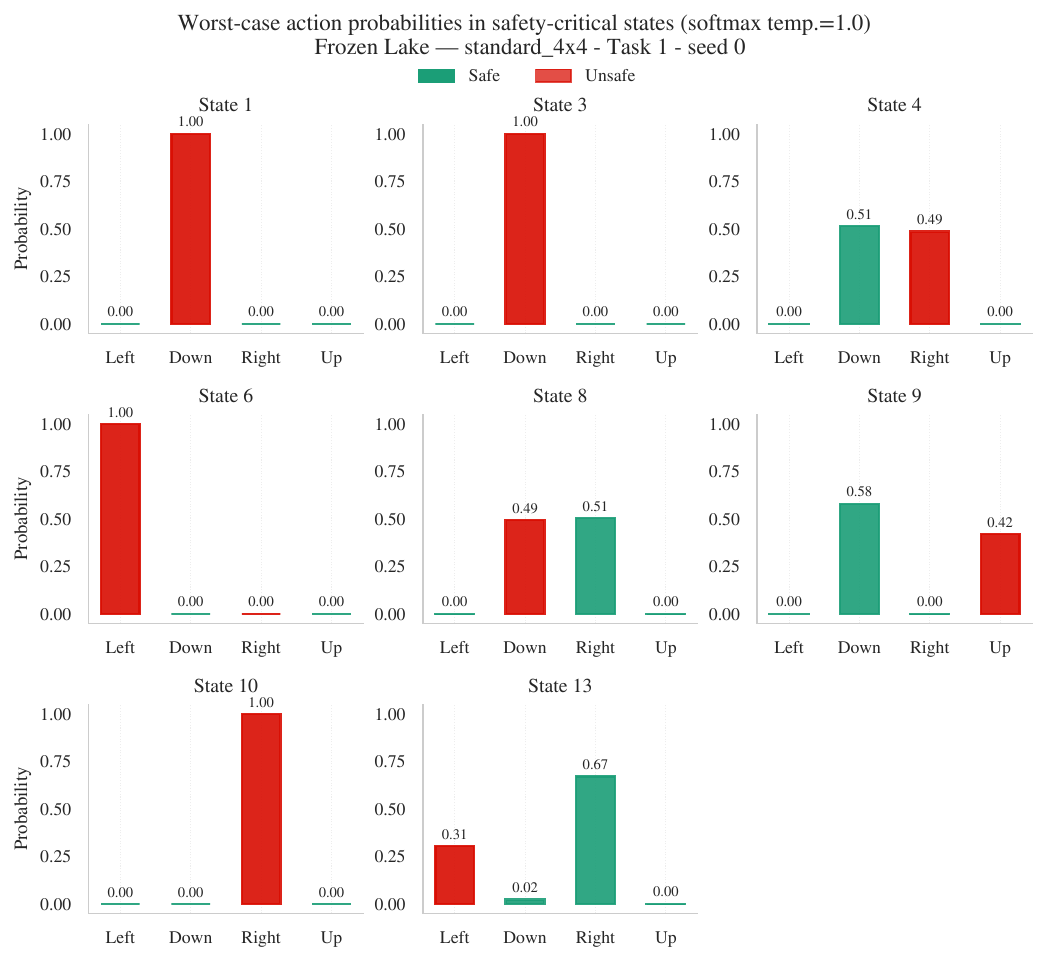}
    \caption{Neural policy probabilities for the worst-case logit vector in the source task of Frozen Lake (\texttt{standard 4x4}).}
\label{fig:worst_case_logit_probs_frozenlake_standard_4x4}
\end{figure}
\FloatBarrier

\end{document}